%% file: iclr2027_conference.tex
\documentclass{article} 
\usepackage{iclr2027_conference,times}

\input{math_commands.tex}

\usepackage{hyperref}
\usepackage{amsthm}
\newtheorem{corollary}{Corollary}
\usepackage{url}
\usepackage{amsthm}
\usepackage{subcaption}
\usepackage{graphicx}
\usepackage{booktabs}
\usepackage{amssymb}
\usepackage{algorithm}
\usepackage{algpseudocode}
\usepackage{float}

\newtheorem{theorem}{Theorem}
\newtheorem{lemma}{Lemma}

\title{Orthogonal Witness Control for Muon Optimization
via Sigmoid Spectral Reshaping}

\author{}

\begin{document}

\maketitle

\vspace{-5em}
\begin{center}
{\bf Dat Phi Van$^{1,*,\dagger}$} \quad
{\bf Ngo Vu Minh$^{1,*}$} \quad
{\bf Tuc Nguyen$^{2,*}$} \quad
{\bf Thin Nguyen$^{3}$} \quad
{\bf Ngoc-Thanh Dinh$^{1}$} \quad
{\bf Trung Le$^{4}$}

\vspace{0.5em}
$^{1}$Center for AI Research, VinUniversity, Hanoi, Vietnam \\
$^{2}$Indiana University, Bloomington, IN, USA \\
$^{3}$Deakin University, Burwood, VIC, Australia \\
$^{4}$Monash University, Clayton, VIC, Australia
\end{center}

\let\thefootnote\relax\footnotetext{$^{*}$Equal contribution.}
\let\thefootnote\relax\footnotetext{$^{\dagger}$Corresponding author: Dat Phi Van, \texttt{dat.pv@vinuni.edu.vn}.}

\vspace{1em}

\begin{abstract}
Matrix-valued optimizers such as Muon exploit the spectral structure of neural network updates through Newton--Schulz orthogonalization, but their near-flattening of the singular spectrum discards relative magnitude information across gradient modes. We introduce \emph{Soren} (\textbf{S}pectral \textbf{O}rthogonal \textbf{Re}shapi\textbf{n}g), a matrix-valued optimizer that preserves the singular subspaces of the gradient while applying a bounded, monotone sigmoid transformation to its singular values. This smoothly compresses dominant modes without fully flattening the spectrum. We interpret Soren as a positive-definite preconditioned gradient method and establish convergence guarantees under relative smoothness and metric Polyak--{\L}ojasiewicz geometry. To avoid explicit singular value decomposition, we further develop a finite-depth Soft Newton--Schulz (SNS) polynomial realization of the sigmoid spectral map and characterize how its spectral approximation affects the induced convergence geometry. Experiments across LLM pre-training, supervised fine-tuning, and direct preference optimization demonstrate the effectiveness and robustness of Soren against established optimizers.
\end{abstract}

\section{Introduction}

Optimization methods for deep neural networks increasingly exploit the
matrix structure of neural network parameters rather than treating all
parameters as unstructured vectors. Muon~\citep{jordan2024muon} has recently emerged
as a matrix-valued optimizer for hidden layers. It first constructs a
momentum-based update following SGD~\citep{sgd} with momentum~\citep{momentum},
and then applies Newton--Schulz iterations to approximately orthogonalize
the update. This preserves the singular subspaces of the update while
flattening its active singular spectrum.

This spectral transformation can mitigate the dominance of a small number
of singular modes and produce more balanced update directions. However,
flattening the spectrum also discards relative magnitude information across
gradient modes, pushing weak and strong active directions toward a similar
scale. This motivates a softer form of spectral transformation that can
control spectral imbalance while retaining information about the relative
strengths of different modes.

We introduce \emph{Soren}
(\textbf{S}pectral \textbf{O}rthogonal \textbf{Re}shapi\textbf{n}g), a
sigmoid-based matrix-valued optimizer that preserves the singular
subspaces of the gradient while smoothly reshaping their magnitudes. The
bounded and monotone sigmoid transformation compresses dominant singular
modes while preserving their ordering and the non-uniformity of the
spectrum. In contrast to SGD-type updates, which retain the original
spectral imbalance, and Muon, which strongly flattens the active spectrum,
Soren provides controlled spectral shaping while retaining relative
spectral structure.

We study Soren from both optimization and computational perspectives. The
sigmoid spectral transformation admits a positive-definite preconditioning
interpretation, enabling convergence analysis under relative smoothness and
metric Polyak--{\L}ojasiewicz geometry. To avoid the explicit SVD required
by the exact spectral map, we further develop a finite-depth
Soft Newton--Schulz (SNS) polynomial realization. We also establish a
convergence guarantee for the resulting finite-step SNS update. More
specifically, our contributions are as follows:
\begin{itemize}
\item We introduce \emph{Soren}, a sigmoid-based spectral reshaping
optimizer that preserves the singular subspaces of matrix-valued
gradients while smoothly compressing dominant singular modes without
fully flattening the spectrum.

\item We establish convergence guarantees for Soren under relative
smoothness and metric Polyak--{\L}ojasiewicz geometry. By interpreting
the sigmoid spectral transformation as a positive-definite
preconditioner, our analysis yields a product-form convergence bound
for iteration-dependent metrics and a linear convergence rate under
uniform metric bounds.

\item We develop a finite-depth Soft Newton--Schulz polynomial
realization of the sigmoid spectral map that avoids explicit SVD
computation, and establish a corresponding convergence guarantee for
the finite-step SNS update.

\item We evaluate Soren across LLM pre-training, supervised fine-tuning,
and direct preference optimization, demonstrating consistent
improvements over established optimizers across diverse training
settings.

\end{itemize}

\section{Related work}
\label{related}

\paragraph{Classical and adaptive optimization.}
First-order optimization remains the foundation of modern neural network training. Stochastic gradient descent (SGD) provides the canonical baseline for large-scale optimization~\citep{sgd}, with momentum methods accelerating gradient descent by accumulating past gradients~\citep{momentum}, while Nesterov acceleration introduces a look-ahead correction that improves the behavior of momentum-based updates~\citep{nag}. In deep learning, the choice of momentum and initialization has been shown to strongly affect trainability and convergence~\citep{init_importance}. Adaptive optimizers further improve robustness by rescaling updates using moment estimates, with Adam~\citep{kingma2015adam} and AdamW~\citep{loshchilov2019decoupled} becoming standard choices for large-scale training and fine-tuning. These methods, however, mainly treat parameters as unstructured vectors, leaving matrix structure in neural network layers largely unused.

\paragraph{Matrix-valued spectral optimization.}
Recent work has increasingly exploited the matrix structure of neural network
parameters for optimization. Muon~\citep{jordan2024muon} applies Newton--Schulz
iterations to momentum-based updates of matrix-valued parameters, effectively
approximating their polar factors and flattening the nonzero singular values.
HTMuon~\citep{htmuon} extends this framework by reshaping the singular
spectrum through a heavy-tailed transformation. More recently,
~\citep{qi2026delving} provide a systematic spectral analysis of
Muon and investigate a broader family of spectral transformations. Related
to polar decomposition and Newton--Schulz orthogonalization
~\citep{bjork1971iterative}, Soren instead applies a smooth sigmoid-based
spectral reshaping that compresses dominant singular modes while retaining
the relative ordering and non-uniformity of the spectrum.

\paragraph{Theoretical foundations for spectral preconditioning.}
The spectral interpretation of Muon connects naturally to
curvature-aware and preconditioned optimization. Local quadratic
models and Gauss--Newton approximations provide standard frameworks
for analyzing optimization dynamics near a current iterate
~\citep{martens2010deep}. Relative smoothness
~\citep{bauschke2017relativecanonical,lu2018relativecanonical}
and Polyak--{\L}ojasiewicz (PL) geometry
~\citep{polyak1963gradient,karimi2016ecmlpkddcanonical}
provide a complementary framework for establishing convergence beyond
the standard strongly convex setting. Building on these tools, we
interpret Soren's sigmoid spectral reshaping as a controlled
positive-definite preconditioning of the gradient and analyze both its
exact and finite-step polynomial realizations under the corresponding
metric conditions.

\section{Preliminaries}
\label{Preliminaries}

\subsection{Muon Optimizer}
At iteration $t$, given the current weight $\boldsymbol{W}_{t-1}$, let the full gradient be
$\boldsymbol{G}_t=\nabla\mathcal{L}(\boldsymbol{W}_{t-1})$. Following the spectral discussion in the introduction, we first present Muon in its clean orthogonal-gradient form:
\begin{align}
\boldsymbol{O}_{t} & = \mathrm{Newton\text{-}Schulz}\left(\boldsymbol{G}_{t}\right),\nonumber \\
\boldsymbol{W}_{t} & = \boldsymbol{W}_{t-1}-\eta\boldsymbol{O}_t .
\label{eq:MuonIdeal}
\end{align}
Here $\eta$ denotes the learning rate. The Newton--Schulz iteration approximates the polar factor of its input~\citep{jordan2024muon}. In particular, if $\boldsymbol{G}_{t}=U\Sigma V^\top$ is the singular value decomposition of $\boldsymbol{G}_t$, then the corresponding ideal polar update is
$\left(\boldsymbol{G}_{t}\boldsymbol{G}_{t}^{\top}\right)^{-1/2}\boldsymbol{G}_{t}=UV^\top$.
Therefore, the Muon step preserves the singular subspaces of the gradient while replacing its singular spectrum by a flat one.

The expression above is the form most convenient for analysis. In practical implementations, however, Muon is commonly applied to a momentum-based update rather than the raw gradient itself.

\subsection{Momentum-Induced Gradient Updates}
Momentum-based gradient descent~\citep{momentum} accumulates past gradients through the state
\begin{align}
\boldsymbol{M}_{t} & = \mu\boldsymbol{M}_{t-1}+(1-\mu)\boldsymbol{G}_t,\nonumber \\
\boldsymbol{W}_{t} & = \boldsymbol{W}_{t-1}-\eta\boldsymbol{M}_{t},
\label{eq:MGD}
\end{align}
where $\eta$ is the learning rate and $\mu$ is the momentum coefficient. The momentum buffer is updated using linear interpolation. The state $\boldsymbol{M}_t$ smooths stochastic fluctuations while retaining information from previous iterations.

Nesterov accelerated gradient (NAG)~\citep{nag} introduces a look-ahead correction to the momentum direction:
\begin{align}
\boldsymbol{N}_{t} = (1-\mu)\boldsymbol{G}_t+\mu\boldsymbol{M}_{t}.
\label{eq:NAG}
\end{align}
Practical Muon implementations often apply Newton--Schulz orthogonalization to this Nesterov-style update~\citep{jordan2024muon}:
\begin{align}
\boldsymbol{O}_{t} & = \mathrm{Newton\text{-}Schulz}\left(\boldsymbol{N}_{t}\right),\nonumber \\
\boldsymbol{W}_{t} & = \boldsymbol{W}_{t-1}-\eta\boldsymbol{O}_t .
\label{eq:NAGMuon}
\end{align}
It combines the current gradient with the accumulated momentum before the final step is formed.
Thus, $\boldsymbol{G}_t$, $\boldsymbol{M}_t$, and $\boldsymbol{N}_t$ denote different update quantities, while Muon refers to the spectral transformation applied before updating the weights.

\section{Theoretical Analysis of Sigmoid Spectral Reshaping}
\label{Theoretical}

\subsection{A Geometric Perspective on Spectral Updates}

Let
\(
G_t
=
\nabla\mathcal{L}(W_{t-1})
=
U\Sigma V^\top
=
\sum_i \sigma_i u_i v_i^\top
\)
be the singular value decomposition of the gradient, and define
\(A_i=u_i v_i^\top\). These rank-one matrices are orthonormal under
the Frobenius inner product:
\(
\langle A_i,A_j\rangle_F
=
\operatorname{Tr}(A_i^\top A_j)
=
\delta_{ij},
\)
where \(\delta_{ij}\) denotes the Kronecker delta. Thus, the gradient
admits an orthogonal decomposition into spectral directions \(A_i\),
with \(\sigma_i\) specifying the magnitude of the gradient along each
direction. For an update \(O_t\) that preserves these spectral directions, we can
similarly write
\(
O_t=\sum_i\alpha_i A_i,
\)
where \(\alpha_i\) denotes the magnitude of the update along the
\(i\)-th spectral direction. With
\(W_t=W_{t-1}-\eta O_t\), the resulting parameter displacement is
\[
\Delta W_t
=
W_t-W_{t-1}
=
-\eta O_t
=
-\eta\sum_i\alpha_i A_i.
\]
Hence, from a spectral perspective, an optimizer can be viewed as
transforming the singular-value coefficients \(\{\sigma_i\}\) of the
gradient into the update coefficients \(\{\alpha_i\}\), while
preserving the underlying spectral directions \(\{A_i\}\). Under a first-order Taylor approximation around \(W_{t-1}\), the corresponding loss change is:
\[
\mathcal{L}(W_t)-\mathcal{L}(W_{t-1})
\approx
\left\langle G_t,\Delta W_t\right\rangle_F
=
-\eta\left\langle G_t,O_t\right\rangle_F.
\]
Substituting the spectral decompositions of \(G_t\) and \(O_t\), and
using the orthonormality of the spectral directions \(\{A_i\}\), we obtain:
\[
\left\langle G_t,O_t\right\rangle_F
=
\left\langle\sum_i\sigma_iA_i,\sum_j\alpha_jA_j\right\rangle_F
=
\sum_i\sum_j\sigma_i\alpha_j\left\langle A_i,A_j\right\rangle_F
=
\sum_i\sigma_i\alpha_i.
\]
Hence, the first-order loss change is governed by the interaction between the gradient coefficients $\{\sigma_i\}$ and the update coefficients $\{\alpha_i\}$, with the contribution of each spectral mode to the local descent determined by the product $\sigma_i\alpha_i$.

This spectral view provides a common interpretation of SGD and Muon. SGD sets $\alpha_i=\sigma_i$, giving $\langle G_t,O_t\rangle_F=\sum_i\sigma_i^2$, so dominant modes contribute quadratically to first-order descent. Muon sets $\alpha_i=1$ on active modes, giving $\langle G_t,O_t\rangle_F=\sum_i\sigma_i$, thereby flattening the relative magnitudes across active modes. In contrast, Soren uses the spectral transformation
\(\alpha_i=s(\sigma_i)\), where
\(
s(x)=\frac{1}{1+e^{-x}}
\) is the sigmoid function, giving:
\[
O_t=\sum_i s(\sigma_i)A_i,
\qquad
\left\langle G_t,O_t\right\rangle_F
=
\sum_i\sigma_i s(\sigma_i).
\]
Because \(s(\cdot)\) is strictly increasing and bounded, Soren preserves
the ordering of spectral modes while compressing their relative
magnitudes and bounding the update coefficients. We next establish
convergence guarantees for this spectral update.

\subsection{Convergence under Relative Smoothness and PL Geometry}
\label{convergence}

The spectral reshaping operation in Soren admits an equivalent
positive-definite preconditioning view, which allows us to analyze
convergence under relative smoothness and metric
Polyak--\L{}ojasiewicz (PL) geometry. Let $G_t\in\mathbb{R}^{m\times n}$, $m\le n$, have full row rank at every analyzed step, with thin SVD $G_t=U_t\Sigma_tV_t^\top$ and $\Sigma_t=\operatorname{diag}(\sigma_{t,1},\ldots,\sigma_{t,m})\succ0$. The spectral map $s(G_t)=U_ts(\Sigma_t)V_t^\top$ induces the following positive-definite preconditioner:
\begin{equation}
P_t = U_ts(\Sigma_t)\Sigma_t^{-1}U_t^\top = U_t\operatorname{diag}\!\left(\frac{s(\sigma_{t,i})}{\sigma_{t,i}}\right)U_t^\top \succ0.
\label{eq:sigmoid_preconditioner}
\end{equation}
Thus, $s(G_t)=P_tG_t$. Writing $w_t=\operatorname{vec}(W_t)$, $g_t=\operatorname{vec}(G_t)$, where $\operatorname{vec}(\cdot)$
denotes column-wise vectorization, and defining $\mathcal{P}_t=I_n\otimes P_t$, the update can be written exactly as:
\begin{equation}
w_t=w_{t-1}-\eta_t\mathcal{P}_tg_t.
\label{eq:sigmoid_vector_update}
\end{equation}
The derivation is given in Appendix~\ref{subsec:app_preconditioner}. For any positive-definite matrix $M$, define $\|v\|_M^2=v^\top Mv$. We measure parameter displacements in the $\mathcal{P}_t^{-1}$ norm and gradients in the dual $\mathcal{P}_t$ norm. The metric is determined by the gradient at $w_{t-1}$ and held fixed
throughout the one-step analysis. Let $\Omega_t$ be a convex region containing the update segment. Assume that $\mathcal{L}$ is twice continuously differentiable on a neighborhood of $\Omega_t$, with full Hessian $H(z)=\nabla^2\mathcal{L}(z)$ and finite optimal value $\mathcal{L}^*$.

With $\mathcal{P}_t$ fixed, define $\widetilde{\beta}_t$ as the smallest relative-smoothness constant:
\begin{equation}
\widetilde{\beta}_t = \inf\left\{\beta>0: H(z)\preceq\beta\mathcal{P}_t^{-1} \ \text{for all }z\in\Omega_t\right\},
\label{eq:beta_def}
\end{equation}
and define $\widetilde{\alpha}_t$ as the largest metric Polyak--\L{}ojasiewicz (PL) constant:
\begin{equation}
\widetilde{\alpha}_t = \inf_{\substack{w\in\Omega_t\\ \mathcal{L}(w)>\mathcal{L}^*}} \frac{\tfrac12\|\nabla\mathcal{L}(w)\|_{\mathcal{P}_t}^2} {\mathcal{L}(w)-\mathcal{L}^*}.
\label{eq:alpha_def}
\end{equation}
Equation~\ref{eq:beta_def} characterizes relative smoothness with respect to the quadratic reference induced by $\mathcal P_t^{-1}$ ~\citep{bauschke2017relativecanonical,lu2018relativecanonical}, while Equation~\ref{eq:alpha_def} defines the corresponding $\mathcal P_t$-metric PL constant ~\citep{karimi2016ecmlpkddcanonical}. Together, they determine the per-step contraction factor $1-\widetilde{\alpha}_t/\widetilde{\beta}_t$. Under the step size specified below, the descent lemma together with the lower bound $\mathcal{L}\ge\mathcal{L}^*$ ensures that $\widetilde{\alpha}_t\le\widetilde{\beta}_t$. The metric formulations and descent lemma are derived in Appendix~\ref{subsec:app_metric_geometry}, with the convergence proof given in Appendix~\ref{subsec:app_exact_convergence}.

\begin{theorem}[Soren convergence]
\label{thm:exact_soren_convergence}
Under the assumptions above, suppose that the line segment between
$w_{t-1}$ and $w_t$ lies in $\Omega_t$ for every iteration $t$, and that
$0<\widetilde{\alpha}_t<\infty$ and
$0<\widetilde{\beta}_t<\infty$. With the step size
$\eta_t=1/\widetilde{\beta}_t$, the exact Soren update in
Eq.~\ref{eq:sigmoid_vector_update} satisfies, for any integer
$T\ge1$,
\begin{equation}
\mathcal{L}(w_T)-\mathcal{L}^*
\le
\left[
\prod_{t=1}^{T}
\left(1-\frac{\widetilde{\alpha}_t}{\widetilde{\beta}_t}\right)
\right]
\left(\mathcal{L}(w_0)-\mathcal{L}^*\right).
\label{eq:exact_product_rate}
\end{equation}
Here, $t$ indexes the optimization iterations and $T$ denotes the
number of iterations under consideration. If there exist constants
$0<\overline{\alpha}\le\overline{\beta}<\infty$ such that
\begin{equation}
\widetilde{\alpha}_t\ge\overline{\alpha},
\qquad
\widetilde{\beta}_t\le\overline{\beta},
\qquad
\forall t,
\label{eq:uniform_metric_bounds}
\end{equation}
then
\begin{equation}
\mathcal{L}(w_T)-\mathcal{L}^*
\le
\left(1-\frac{\overline{\alpha}}{\overline{\beta}}\right)^T
\left(\mathcal{L}(w_0)-\mathcal{L}^*\right).
\label{eq:exact_uniform_rate}
\end{equation}
\end{theorem}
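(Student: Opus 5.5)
We argue one step at a time and then chain the resulting per-step contractions. Fix an iteration $t$ and write $g_t=\nabla\mathcal{L}(w_{t-1})$ and $d_t=w_t-w_{t-1}=-\eta_t\mathcal{P}_tg_t$. The segment between $w_{t-1}$ and $w_t$ lies in the convex set $\Omega_t$ on which $\mathcal{L}$ is $C^2$, so Taylor's theorem with integral remainder applies. Combined with $H(z)\preceq\widetilde{\beta}_t\mathcal{P}_t^{-1}$ on $\Omega_t$, it gives the metric descent lemma
\[
\mathcal{L}(w_t)\le\mathcal{L}(w_{t-1})+\langle g_t,d_t\rangle+\tfrac{\widetilde{\beta}_t}{2}\|d_t\|_{\mathcal{P}_t^{-1}}^2 .
\]
The bound $H\preceq\widetilde{\beta}_t\mathcal{P}_t^{-1}$ holds at the infimum itself, since the constraint set in Eq.~\ref{eq:beta_def} is closed under $\beta\downarrow\widetilde\beta_t$: it is an intersection of closed conditions. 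Substituting $d_t$ yields $\langle g_t,d_t\rangle=-\eta_t\|g_t\|_{\mathcal{P}_t}^2$ and $\|d_t\|^2_{\mathcal{P}_t^{-1}}=\eta_t^2 g_t^\top\mathcal{P}_t\mathcal{P}_t^{-1}\mathcal{P}_tg_t=\eta_t^2\|g_t\|_{\mathcal{P}_t}^2$. With $\eta_t=1/\widetilde\beta_t$ this becomes
\[
\mathcal{L}(w_t)\le\mathcal{L}(w_{t-1})-\tfrac{1}{2\widetilde\beta_t}\|g_t\|_{\mathcal{P}_t}^2 .
\]

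Next we apply the metric PL inequality at the point $w=w_{t-1}\in\Omega_t$. If $\mathcal{L}(w_{t-1})>\mathcal{L}^*$, the definition in Eq.~\ref{eq:alpha_def} gives $\tfrac12\|g_t\|_{\mathcal{P}_t}^2\ge\widetilde\alpha_t(\mathcal{L}(w_{t-1})-\mathcal{L}^*)$. Subtracting $\mathcal{L}^*$ from both sides of the descent bound then gives
\[
\mathcal{L}(w_t)-\mathcal{L}^*\le\Bigl(1-\tfrac{\widetilde\alpha_t}{\widetilde\beta_t}\Bigr)(\mathcal{L}(w_{t-1})-\mathcal{L}^*).
\]
If instead $\mathcal{L}(w_{t-1})=\mathcal{L}^*$, the descent bound gives $\mathcal{L}(w_t)\le\mathcal{L}^*$, hence equality, and the inequality holds trivially. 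The factor $1-\widetilde\alpha_t/\widetilde\beta_t$ is nonnegative, because the contraction inequality together with $\mathcal{L}(w_t)\ge\mathcal{L}^*$ forces $\widetilde\alpha_t\le\widetilde\beta_t$ whenever the gap is positive. This is the remark made before the theorem, and we will verify it explicitly since the telescoping relies on it.

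To finish, we induct on $T$, multiplying the per-step inequalities. The nonnegativity of every factor lets the inequalities be chained, which yields Eq.~\ref{eq:exact_product_rate}. For the uniform statement, the hypotheses $\widetilde\alpha_t\ge\overline\alpha$ and $\widetilde\beta_t\le\overline\beta$ give $0\le 1-\widetilde\alpha_t/\widetilde\beta_t\le1-\overline\alpha/\overline\beta$, and the product is then bounded by $(1-\overline\alpha/\overline\beta)^T$.

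The main obstacle is a circularity in the setup rather than the algebra. The step size depends on $\widetilde\beta_t$, and $\widetilde\beta_t$ depends on $\Omega_t$, which must contain $w_t$. We resolve this by taking the segment-containment hypothesis as given, as the statement does. A related care point is the nonnegativity of $1-\widetilde\alpha_t/\widetilde\beta_t$ in the degenerate case where the infimum defining $\widetilde\alpha_t$ is not attained at $w_{t-1}$. There we would argue directly from the descent bound at $w_{t-1}$ and the lower bound $\mathcal{L}\ge\mathcal{L}^*$, which gives $\widetilde\alpha_t\le\tfrac12\|g_t\|^2_{\mathcal{P}_t}/(\mathcal{L}(w_{t-1})-\mathcal{L}^*)\le\widetilde\beta_t$.
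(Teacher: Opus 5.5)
Your argument follows the paper's proof essentially step for step. It uses the integral-Taylor metric descent lemma, the choice $\eta_t=1/\widetilde{\beta}_t$ giving decrease $\frac{1}{2\widetilde{\beta}_t}\|g_t\|_{\mathcal{P}_t}^2$, the metric PL inequality at $w_{t-1}$, nonnegativity of the factor from $\mathcal{L}(w_t)\ge\mathcal{L}^*$, and chaining. Your remark that the infimum defining $\widetilde{\beta}_t$ is attained is a detail the paper uses without comment.

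There is one loose end, and it comes from the branch $\mathcal{L}(w_{t-1})=\mathcal{L}^*$ that you explicitly allow. You give two nonnegativity arguments: the one after the contraction inequality, and the one in your last paragraph, which divides by $\mathcal{L}(w_{t-1})-\mathcal{L}^*$. Both need $w_{t-1}$ to lie in the feasible set of Eq.~\ref{eq:alpha_def}, i.e.\ a strictly positive gap. Whether the infimum is attained is irrelevant, so the ``degenerate case'' you single out is not the problematic one.

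The real problem is this. If the gap vanished at some step $s<T$, then $\mathcal{L}(w_T)=\mathcal{L}^*$, and the claimed bound still requires $\prod_{t}(1-\widetilde{\alpha}_t/\widetilde{\beta}_t)\ge 0$. Nothing you wrote controls the sign of the factors with $t>s$. The same issue affects the comparison $0\le 1-\widetilde\alpha_t/\widetilde\beta_t\le 1-\overline\alpha/\overline\beta$ in the uniform rate.

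The paper closes this in one line, and you should too. The standing full-row-rank hypothesis gives $g_t\neq 0$. Since $\mathcal{L}$ is differentiable on an open neighborhood of $w_{t-1}$, a point with nonzero gradient cannot be a global minimizer, so $\mathcal{L}(w_{t-1})>\mathcal{L}^*$ at every analyzed step. Your zero-gap branch is therefore vacuous; indeed $P_t$ is not even defined when $G_t$ is rank-deficient. Once that branch is discarded, your proof is complete.
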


The product bound allows the metric to vary across iterations, whereas
the uniform bounds yield a linear convergence rate. The proof in
Appendix~\ref{subsec:app_exact_convergence} combines the full-Hessian
descent lemma with the metric PL condition.

\subsection{Polynomial Realization via Soft Newton--Schulz Iteration}
\label{polynomial}
\label{subsec:sns_exact_comparison}

To avoid an explicit SVD, we use a finite-depth \emph{Soft Newton--Schulz (SNS) iteration}, inspired by Newton--Schulz orthogonalization~\citep{bjork1971iterative}. The identity $\operatorname{sigmoid}(x)=\frac12(1+\tanh(x/2))$ motivates two polynomial streams. Initialize $Q_0=G_t/(\|G_t\|_F+\epsilon)$ and $T_0=G_t/4$, and apply
\begin{equation}
\Phi(X)=\frac12(3I-XX^\top)X.
\label{eq:sns_recurrence}
\end{equation}
Applying $K$ iterations to $Q$ and two iterations to $T$, we define
\begin{equation}
\widehat{s}(G_t)=\frac12(Q_K+T_2).
\label{eq:sns_output}
\end{equation}
Algorithms~\ref{alg:sns}--\ref{alg:soren} in Appendix~\ref{app:algorithms} provide pseudocode, while Figure~\ref{fig:sns_comparison} compares SNS with the sigmoid map.
Both streams remain diagonal in the singular-vector basis of $G_t=U_t\Sigma_tV_t^\top$, yielding coefficients $\widehat{s}_{t,i}$. To compare SNS with the exact map at the same gradient iterate, assume
\begin{equation}
|\widehat{s}_{t,i}-s(\sigma_{t,i})|
\le \rho_t s(\sigma_{t,i}),\qquad 0\le\rho_t<1,\qquad \forall i.
\label{eq:relative_spectral_error}
\end{equation}
Under the full-row-rank assumption of Section~\ref{convergence}, this ensures a positive-definite SNS preconditioner. Let $\widehat{\alpha}_t$ and $\widehat{\beta}_t$ denote the corresponding
SNS metric PL and relative-smoothness constants, respectively, defined
as in equation~\ref{eq:beta_def}--\ref{eq:alpha_def} using the SNS metric
on the same region $\Omega_t$ and with the same $\mathcal{L}^*$. Appendix~\ref{subsec:app_sns_metric_details} gives the explicit definitions and preconditioner bounds.

\begin{theorem}[Finite-step geometry preservation]
\label{thm:sns_geometry_preservation}
Under equation~\ref{eq:relative_spectral_error} and $0<\widetilde{\alpha}_t\le\widetilde{\beta}_t<\infty$, the corresponding exact and SNS constants satisfy
\begin{equation}
\frac{1-\rho_t}{1+\rho_t} \frac{\widetilde{\alpha}_t}{\widetilde{\beta}_t} \le \frac{\widehat{\alpha}_t}{\widehat{\beta}_t} \le \frac{1+\rho_t}{1-\rho_t} \frac{\widetilde{\alpha}_t}{\widetilde{\beta}_t}.
\label{eq:condition_ratio_sandwich}
\end{equation}
\end{theorem}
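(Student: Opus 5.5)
The plan is to compare the two preconditioners in the Loewner order and then push that comparison through the definitions of the PL and relative-smoothness constants. The SNS update is diagonal in the same singular basis. So we write $\widehat{s}(G_t)=\widehat{P}_tG_t$ with $\widehat{P}_t=U_t\operatorname{diag}(\widehat{s}_{t,i}/\sigma_{t,i})U_t^\top$, and set $\widehat{\mathcal P}_t=I_n\otimes\widehat P_t$. By equation~\ref{eq:relative_spectral_error}, each eigenvalue $\widehat{s}_{t,i}/\sigma_{t,i}$ of $\widehat{P}_t$ lies between $(1-\rho_t)\,s(\sigma_{t,i})/\sigma_{t,i}$ and $(1+\rho_t)\,s(\sigma_{t,i})/\sigma_{t,i}$. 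Since $P_t$ and $\widehat P_t$ share the eigenbasis $U_t$, this gives
\[
(1-\rho_t)P_t\preceq \widehat P_t\preceq (1+\rho_t)P_t .
\]
Kronecker products with $I_n$ preserve the Loewner order, so $(1-\rho_t)\mathcal P_t\preceq\widehat{\mathcal P}_t\preceq(1+\rho_t)\mathcal P_t$. Both matrices are positive definite and they commute, so inversion reverses the order:
\[
\frac{1}{1+\rho_t}\mathcal P_t^{-1}\preceq\widehat{\mathcal P}_t^{-1}\preceq \frac{1}{1-\rho_t}\mathcal P_t^{-1}.
\]

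Next we bound each constant separately. For the PL constant, the pointwise inequality $(1-\rho_t)\|g\|_{\mathcal P_t}^2\le\|g\|_{\widehat{\mathcal P}_t}^2\le(1+\rho_t)\|g\|_{\mathcal P_t}^2$ holds for every $w\in\Omega_t$. The region $\Omega_t$ and the value $\mathcal L^*$ are shared, so the infima in equation~\ref{eq:alpha_def} satisfy $(1-\rho_t)\widetilde\alpha_t\le\widehat\alpha_t\le(1+\rho_t)\widetilde\alpha_t$.

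For the smoothness constant, suppose $H(z)\preceq\beta\mathcal P_t^{-1}$ on $\Omega_t$. Then $H(z)\preceq\beta\mathcal P_t^{-1}\preceq(1+\rho_t)\beta\,\widehat{\mathcal P}_t^{-1}$, so $\widehat\beta_t\le(1+\rho_t)\widetilde\beta_t$. Symmetrically, $\widetilde\beta_t\le\widehat\beta_t/(1-\rho_t)$, that is, $\widehat\beta_t\ge(1-\rho_t)\widetilde\beta_t$. These steps are taken at the level of the feasible sets of the infimum, and then we pass to the infimum. Some care is needed because the infimum might not be attained, but the feasible sets are upward closed, so the argument goes through.

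Finally we divide. The lower bound on $\widehat\alpha_t$ over the upper bound on $\widehat\beta_t$ gives $\widehat\alpha_t/\widehat\beta_t\ge\frac{1-\rho_t}{1+\rho_t}\widetilde\alpha_t/\widetilde\beta_t$. The reverse pairing gives the upper inequality in equation~\ref{eq:condition_ratio_sandwich}. The assumptions $0<\widetilde\alpha_t\le\widetilde\beta_t<\infty$ and $\rho_t<1$ ensure the SNS constants are positive and finite, so the division is legitimate.

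I do not expect a serious obstacle. The only delicate points are two. First, the positivity of $\widehat s_{t,i}$ must be used so that $\widehat P_t\succ0$ and the SNS constants are well defined. Second, the order reversal under inversion must be handled correctly; here it is immediate because the two preconditioners are simultaneously diagonalizable. Note also that the bound on the SNS ratio could exceed $1$. That does not matter here, since the theorem only claims the sandwich.
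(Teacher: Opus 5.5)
Your proposal is correct and follows essentially the same route as the paper. The paper likewise uses the preconditioner sandwich of Lemma~\ref{lemma:preconditioner_sandwich} and its inverted form, a pointwise dual-norm comparison passed through the shared infimum to bound $\widehat{\alpha}_t$, a two-way Loewner comparison to bound $\widehat{\beta}_t$, and then division using positivity of all four constants. Your feasible-set treatment of the $\beta$-infima is a harmless refinement of the paper's direct substitution of $\widetilde{\beta}_t$, which is valid anyway because the Loewner condition is closed in $\beta$ and $\widetilde{\beta}_t>0$, so the infimum is attained.
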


Thus, the guaranteed per-step progress retains at least a fraction $(1-\rho_t)/(1+\rho_t)$ of that for the exact map at the same iterate. The individual metric bounds and proof appear in Appendices~\ref{subsec:app_sns_metric_details}--\ref{subsec:app_geometry_preservation}. These bounds directly yield the following convergence guarantee for the
finite-step SNS update.
\begin{corollary}[SNS convergence]
\label{cor:sns_convergence}
Under the assumptions of Section~\ref{convergence}, suppose each SNS update segment lies in $\Omega_t$ and, along the SNS trajectory, Eq.~(\plaineqref{eq:relative_spectral_error}) holds with $\rho_t\le\overline{\rho}<1$, $\widetilde{\alpha}_t\ge\overline{\alpha}>0$, and $\widetilde{\beta}_t\le\overline{\beta}<\infty$. Then, with $\eta_t=1/\widehat{\beta}_t$,
\begin{equation}
\mathcal{L}(w_T)-\mathcal{L}^* \le \left[ 1- \frac{1-\overline{\rho}}{1+\overline{\rho}} \frac{\overline{\alpha}}{\overline{\beta}} \right]^T \left(\mathcal{L}(w_0)-\mathcal{L}^*\right).
\label{eq:sns_uniform_rate}
\end{equation}
\end{corollary}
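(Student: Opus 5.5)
The plan is to treat the SNS step exactly as Theorem~\ref{thm:exact_soren_convergence} treats the exact step, with the SNS metric in place of the Soren metric. Theorem~\ref{thm:sns_geometry_preservation} then converts the SNS condition ratio into one expressed through the uniform exact constants.

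\textbf{Step 1: the SNS update is a preconditioned gradient step.} Both streams of the recurrence in Eq.~\ref{eq:sns_recurrence} are odd polynomials in $G_t$. Hence $\widehat{s}(G_t)=U_t\operatorname{diag}(\widehat{s}_{t,i})V_t^\top=\widehat{P}_tG_t$, where
\[
\widehat{P}_t=U_t\operatorname{diag}(\widehat{s}_{t,i}/\sigma_{t,i})U_t^\top .
\]
By Eq.~\ref{eq:relative_spectral_error}, $\widehat{s}_{t,i}\ge(1-\rho_t)s(\sigma_{t,i})>0$, so $\widehat{P}_t\succ0$. Setting $\widehat{\mathcal{P}}_t=I_n\otimes\widehat{P}_t$, the update becomes $w_t=w_{t-1}-\eta_t\widehat{\mathcal{P}}_tg_t$.

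\textbf{Step 2: one-step contraction.} Since the SNS segment lies in $\Omega_t$, the full-Hessian descent lemma applies with $H\preceq\widehat{\beta}_t\widehat{\mathcal{P}}_t^{-1}$ and $\eta_t=1/\widehat{\beta}_t$. It gives
\[
\mathcal{L}(w_t)\le\mathcal{L}(w_{t-1})-\tfrac{1}{2\widehat{\beta}_t}\|g_t\|^2_{\widehat{\mathcal{P}}_t}.
\]
Combining this with the SNS metric PL inequality at $w_{t-1}\in\Omega_t$ yields
\[
\mathcal{L}(w_t)-\mathcal{L}^*\le(1-\widehat{\alpha}_t/\widehat{\beta}_t)(\mathcal{L}(w_{t-1})-\mathcal{L}^*).
\]
This is the exact-case argument from Appendix~\ref{subsec:app_exact_convergence}, reused with the SNS metric. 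The case $\mathcal{L}(w_{t-1})=\mathcal{L}^*$ is trivial. One needs $\widehat{\alpha}_t>0$ and $\widehat{\beta}_t<\infty$; both follow from the spectral sandwich
\[
(1-\rho_t)P_t\preceq\widehat{P}_t\preceq(1+\rho_t)P_t
\]
together with the assumptions on the exact constants.

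\textbf{Step 3: uniform rate.} Theorem~\ref{thm:sns_geometry_preservation} gives $\widehat{\alpha}_t/\widehat{\beta}_t\ge\frac{1-\rho_t}{1+\rho_t}\widetilde{\alpha}_t/\widetilde{\beta}_t$. Its hypothesis $\widetilde{\alpha}_t\le\widetilde{\beta}_t$ holds by the remark preceding Theorem~\ref{thm:exact_soren_convergence}, or otherwise from $\widehat{\alpha}_t\le\widehat{\beta}_t$ and the sandwich. The map $\rho\mapsto(1-\rho)/(1+\rho)$ is decreasing, so $\rho_t\le\overline{\rho}$, $\widetilde{\alpha}_t\ge\overline{\alpha}$ and $\widetilde{\beta}_t\le\overline{\beta}$ give
\[
1-\widehat{\alpha}_t/\widehat{\beta}_t\le1-\tfrac{1-\overline{\rho}}{1+\overline{\rho}}\,\overline{\alpha}/\overline{\beta}\in[0,1).
\]
Nonnegativity of each factor $1-\widehat{\alpha}_t/\widehat{\beta}_t$ follows from the descent lemma and $\mathcal{L}\ge\mathcal{L}^*$, which is what allows the factors to be chained. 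Telescoping over $t=1,\dots,T$ then gives Eq.~\ref{eq:sns_uniform_rate}.

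\textbf{Main obstacle.} The hard part is bookkeeping rather than a new estimate. The constants $\widetilde{\alpha}_t,\widetilde{\beta}_t$ here are the exact-map constants evaluated at the SNS iterates, but on the region $\Omega_t$ containing the SNS segment. I would therefore state explicitly that Theorem~\ref{thm:sns_geometry_preservation} is applied pointwise at each SNS iterate, with $\Omega_t$ shared by both metrics. I would also check that positivity of $\widehat{\alpha}_t$ and finiteness of $\widehat{\beta}_t$ follow from the sandwich; for example, $\widehat{\beta}_t\le\widetilde{\beta}_t(1+\rho_t)$.
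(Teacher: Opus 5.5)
Your proposal is correct and follows the paper's proof: rerun the descent-plus-PL argument of Theorem~\ref{thm:exact_soren_convergence} with $\widehat{\mathcal{P}}_t,\widehat{\alpha}_t,\widehat{\beta}_t$ to get the factor $1-\widehat{\alpha}_t/\widehat{\beta}_t$, lower-bound $\widehat{\alpha}_t/\widehat{\beta}_t$ using Theorem~\ref{thm:sns_geometry_preservation} and the uniform bounds, and recurse; your checks of $\widehat{\alpha}_t>0$, $\widehat{\beta}_t<\infty$ and nonnegativity of the factors fill in details the paper leaves implicit. One slip: neither of your justifications of $\widetilde{\alpha}_t\le\widetilde{\beta}_t$ holds along the SNS trajectory (the remark before Theorem~\ref{thm:exact_soren_convergence} needs the \emph{exact} step's segment to lie in $\Omega_t$, which is not assumed, and the sandwich only gives $\widetilde{\alpha}_t\le\frac{1+\rho_t}{1-\rho_t}\widetilde{\beta}_t$), but this is harmless, because the lower bound in Theorem~\ref{thm:sns_geometry_preservation} uses only $\widehat{\alpha}_t\ge(1-\rho_t)\widetilde{\alpha}_t$, $\widehat{\beta}_t\le(1+\rho_t)\widetilde{\beta}_t$ and positivity, so you can simply drop that hypothesis.
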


The resulting factor $(1-\overline{\rho})/(1+\overline{\rho})$ quantifies the degradation in the guaranteed progress induced by the finite-step approximation. This convergence guarantee is conditional on the spectral-error and uniform metric bounds; the proof is given in Appendix~\ref{subsec:app_sns_convergence}.

\section{Experiments}
\label{experiments}

We evaluate Soren across pre-training and post-training, including supervised fine-tuning (SFT)~\citep{sft} and direct preference optimization (DPO)~\citep{dpo}. We compare against optimizers that are closely related to our work, including AdamW~\citep{loshchilov2019adamw}, Muon~\citep{jordan2024muon}, and HTMuon~\citep{htmuon}.

\subsection{LLM Pre-training}

\paragraph{Experimental Setup.}

We pretrain various LLaMA-family models (LLaMA-60M, LLaMA-135M, LLaMA-350M,
LLaMA-1B~\citep{touvron2023llama}) on
the C4 dataset~\citep{raffel2020exploring}. 

\begin{table}[htbp]
    \centering
    \small
    \setlength{\tabcolsep}{12pt}
    
    \caption{Comparison of established pretraining optimizers across LLaMA models of varying sizes on the C4 dataset. Lower perplexity indicates better performance. Detailed hyperparameter configurations are reported in Tables~\ref{tab:llama_hyperparameters}–\ref{tab:llama_350m_1b_hyperparameters} in Appendix~\ref{hyperparam}.
    }
    \label{tab:optimizer_results}
    
    \begin{tabular}{lcccc}
        \toprule
        & \textbf{LLaMA-60M} 
        & \textbf{LLaMA-135M}
        & \textbf{LLaMA-350M}
        & \textbf{LLaMA-1B} \\
        \midrule
        
        AdamW   & 44.54 & 36.96 & 23.49 & 20.82  \\
        Muon  & 36.86  & 27.42 & 21.73 & 18.41\\
        HTMuon   & 35.32 & \textbf{26.15} & 22.58 & 18.44  \\
        
        \textbf{Soren}
        & \textbf{35.14}
        & \textbf{26.15}
        & \textbf{21.53}
        & \textbf{18.29} \\
        
        \bottomrule
    \end{tabular}
\end{table}

Table~\ref{tab:optimizer_results} compares Soren with established pre-training optimizers across four LLaMA model scales on the C4 dataset. Soren achieves the lowest perplexity on three of the four model scales and ties for the best result on the remaining scale, demonstrating consistently strong performance across model sizes. Compared with Muon, Soren reduces perplexity by $1.72$, $1.57$, $0.20$, and $0.12$ on LLaMA-60M, LLaMA-135M, LLaMA-350M, and LLaMA-1B, respectively. Relative to AdamW, Soren achieves reductions of $9.40$, $10.81$, $1.96$, and $2.53$, respectively. Compared with HTMuon, Soren achieves lower perplexity on LLaMA-60M, LLaMA-350M, and LLaMA-1B by $0.18$, $1.05$, and $0.15$, respectively, while matching HTMuon on LLaMA-135M at a perplexity of $26.15$. Overall, Soren consistently outperforms AdamW and Muon across all model scales while achieving comparable or better performance than HTMuon, indicating that Soren provides a competitive optimization approach for LLaMA pre-training.

\begin{figure}[htbp]
    \centering

    \includegraphics[width=0.49\linewidth]{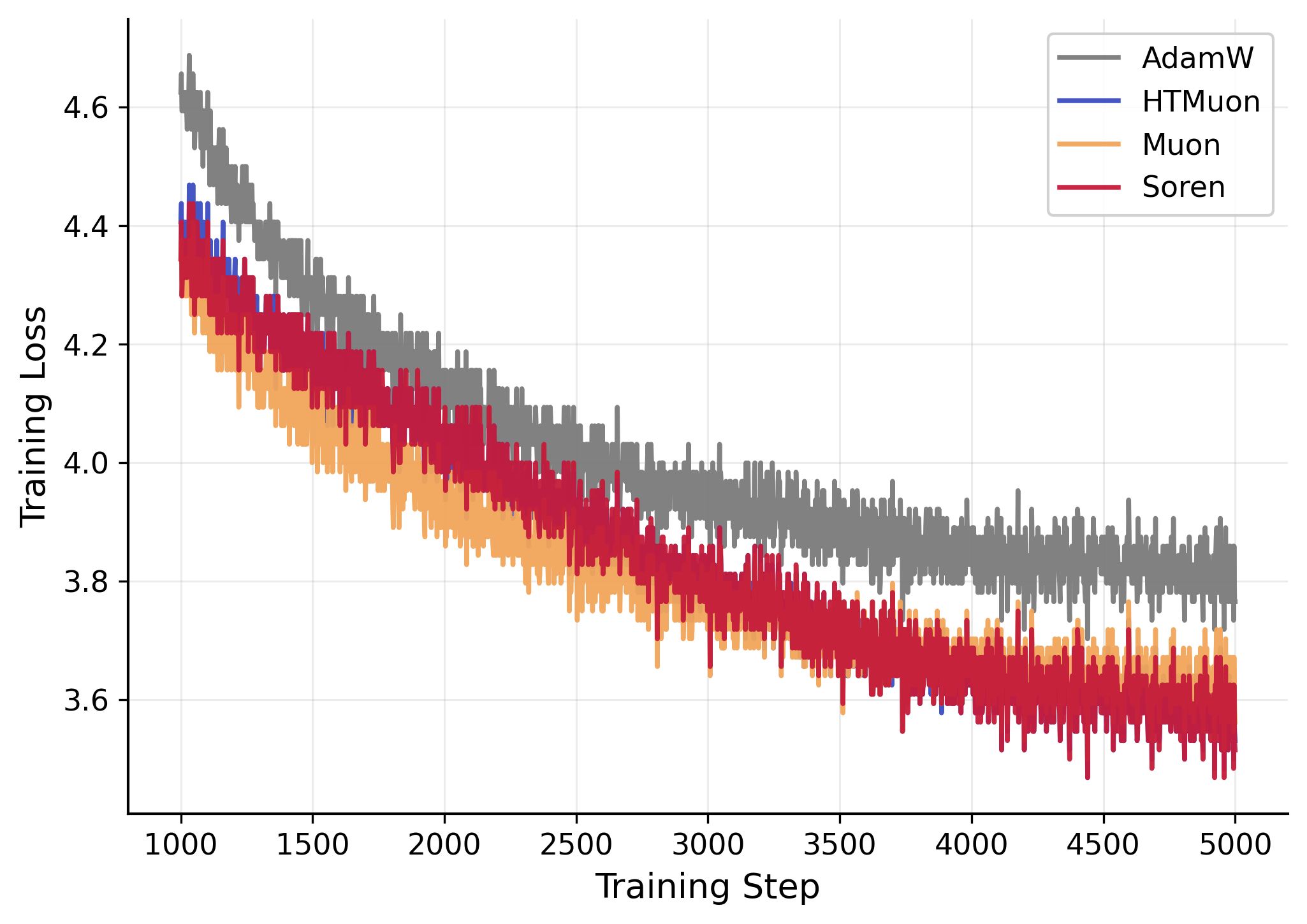}
    \hfill
    \includegraphics[width=0.49\linewidth]{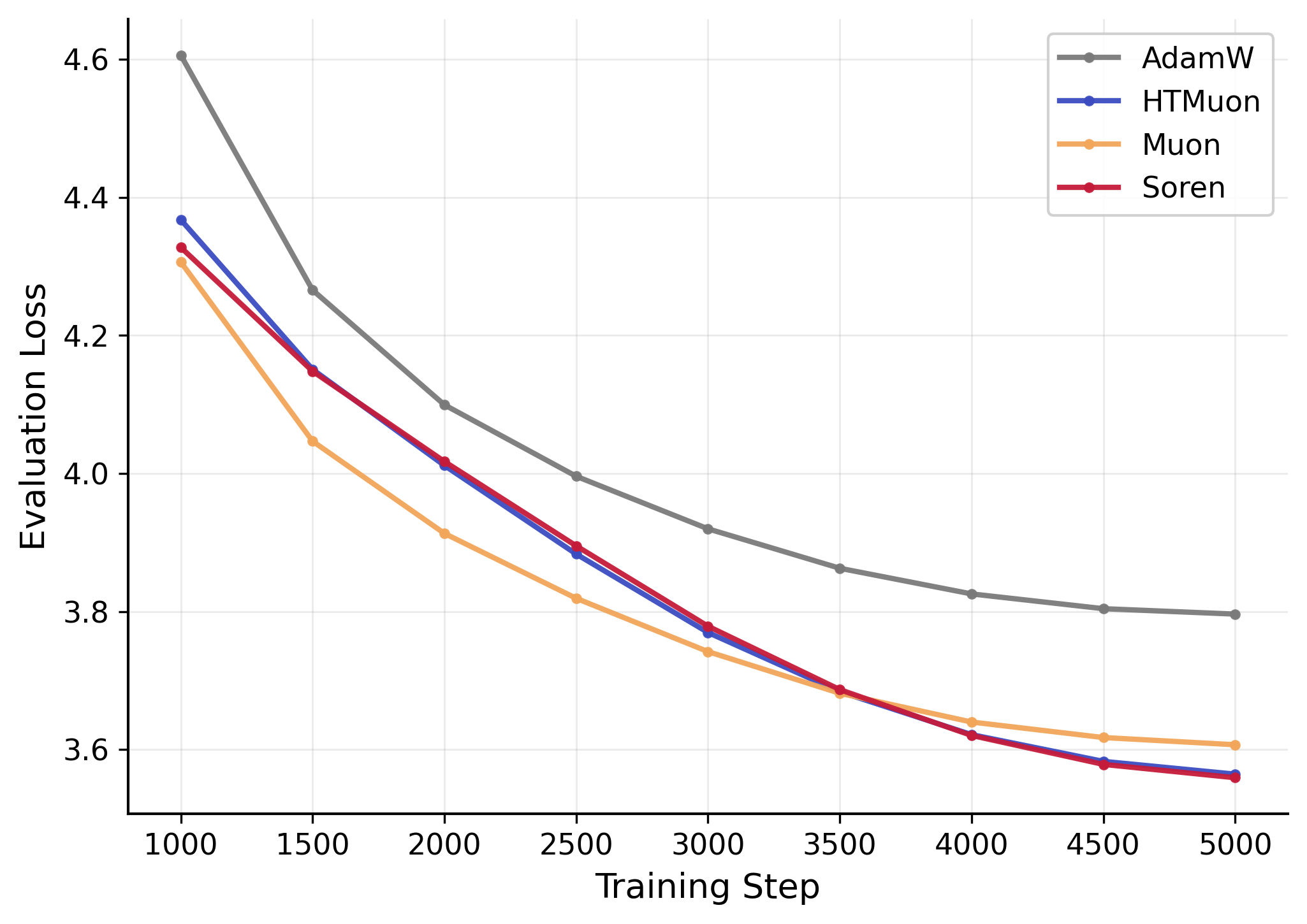}

    \vspace{0.5em}

    \includegraphics[width=0.49\linewidth]{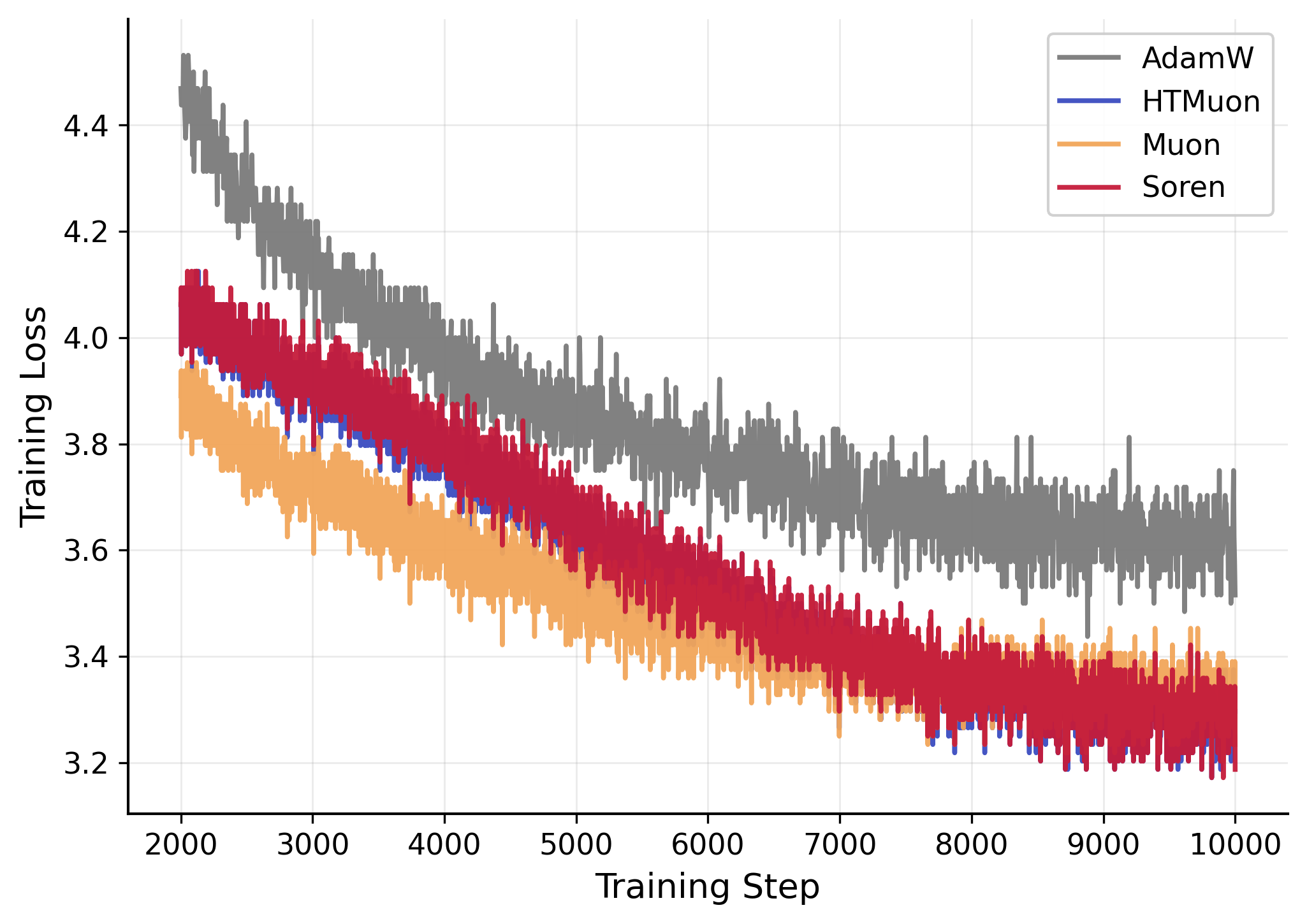}
    \hfill
    \includegraphics[width=0.49\linewidth]{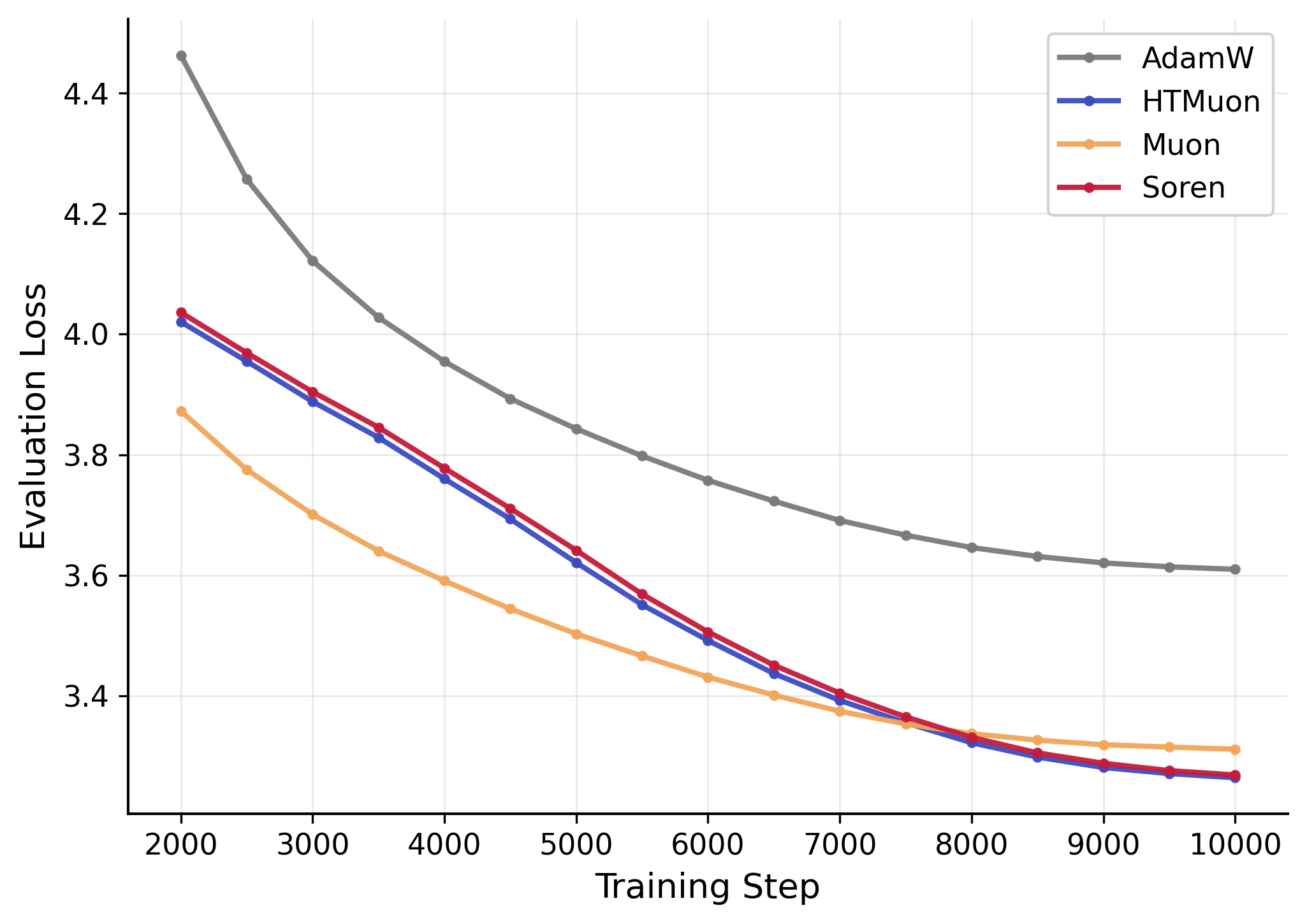}

    \caption{
        Training and evaluation loss curves for the LLaMA-60M (top row) and LLaMA-135M (bottom row) experiments.
    }
    \label{fig:loss}
\end{figure}

In figure~\ref{fig:loss}, we present the training and evaluation loss curves for LLaMA-60M and LLaMA-135M on the C4 dataset. Soren exhibits convergence behavior closely aligned with HTMuon, particularly in the later stages of training, with comparable training and evaluation loss trajectories across both model scales. Compared with Muon, Soren achieves similar or lower losses toward the end of training, while consistently outperforming AdamW in the later stages. Notably, Soren reaches evaluation losses comparable to or slightly lower than HTMuon toward the end of training, consistent with the final perplexity results in Table~\ref{tab:optimizer_results}. In figure~\ref{fig:loss1b} in appendix~\ref{more_expe}, we provide the training loss
curves for LLaMA-350M and LLaMA-1B. These results indicate that Soren exhibits stable convergence behavior and competitive optimization performance during pre-training.

\paragraph{Downstream Tasks}

To further assess the applicability of Soren to broader NLP tasks, we evaluate a Qwen3-0.6B~\citep{yang2025qwen3} model pre-trained with Soren on six commonsense reasoning benchmarks using the lm-eval-harness framework. For all tasks, we use the default prompts and conduct evaluation in a zero-shot setting.

\begin{table}[t]
    \centering
    \setlength{\tabcolsep}{4.5pt}
    \caption{Zero-shot evaluation results ($\uparrow$) on six commonsense reasoning benchmarks for the Qwen3-0.6B model pretrained with different optimizers.}
    \label{tab:commonsense_results}
    \begin{tabular}{lccccccc}
        \toprule
        Optimizer & ARC-c & ARC-e & PIQA & HellaSwag & OBQA & WinoGrande  & Avg. \\
        \midrule
        AdamW
        & 23.13 & 25.05 & 50.16  & 25.49 & 24.00 & 48.80 & 32.77 \\
        Muon
        & 23.46 & 26.89 & 50.11 & 26.97 & 24.40 & 50.51 & 33.72 \\
        HTMuon
        & \textbf{25.77}  & 26.81 & 50.71 & 27.25 & 24.60 & 49.88 & 34.17 \\
        Soren
        & 25.51 & \textbf{27.36} & \textbf{51.25}
        & \textbf{28.47} & \textbf{25.80} & \textbf{51.99} & \textbf{35.06} \\
        \bottomrule
    \end{tabular}
\end{table}

As shown in Table~\ref{tab:commonsense_results}, Soren achieves the best
performance on five out of six commonsense reasoning benchmarks and obtains
the highest average score of $35.06$ across all tasks. Compared with HTMuon,
which achieves the second-highest average score of $34.17$, Soren improves the
overall performance by $0.89$ points. Soren also consistently outperforms
AdamW and Muon across all six benchmarks, while achieving particularly strong
gains on ARC-e, PIQA, HellaSwag, OBQA, and WinoGrande. These results highlight
the effectiveness of Soren for pre-training language models and its ability to
improve performance across a diverse set of commonsense reasoning tasks.

\subsection{LLM Supervised Fine-tuning}

\paragraph{Experimental Setup.}
We conduct supervised fine-tuning experiments on Qwen2.5-3B~\citep{qwen2.5}
using the UltraChat 200k dataset~\citep{ding-etal-2023-enhancing}.
We use the train\_sft split for training and follow the standard data format
provided by the dataset. To assess the general capabilities of the resulting models,
we follow the Hugging Face Open LLM Leaderboard evaluation protocol~\citep{open-llm-leaderboard}
and use the Language Model Evaluation Harness~\citep{eval-harness}.
We evaluate performance on six benchmarks spanning commonsense reasoning,
multi-task understanding, factuality, and mathematical reasoning:
ARC~\citep{clark2018think}, HellaSwag~\citep{zellers2019hellaswag},
Winogrande~\citep{sakaguchi2020winogrande}, MMLU~\citep{hendrycks2021measuring},
TruthfulQA~\citep{lin-etal-2022-truthfulqa}, and GSM8K~\citep{cobbe2021training}.

\begin{table}[htbp]
    \centering
    \small
    \caption{Evaluation results of different optimizers for SFT on Qwen2.5-3B. Detailed hyperparameter configurations are reported in Table~\ref{tab:sft_dpo_hyperparameters} in Appendix~\ref{hyperparam}.}
    \label{tab:qwen25_3b_sft}

    \resizebox{\linewidth}{!}{%
    \begin{tabular}{lccccccc}
        \toprule
        \textbf{Method}
        & \textbf{HellaSwag}
        & \textbf{ARC}
        & \textbf{MMLU}
        & \textbf{TruthfulQA}
        & \textbf{Winogrande}
        & \textbf{GSM8K}
        & \textbf{Average} \\
        \midrule

        AdamW
        & 73.08$_{\pm
        0.44}$
        & 55.97$_{\pm 1.45}$
        & 61.79$_{\pm 0.39}$
        & 46.75$_{\pm 1.48}$
        & 68.90$_{\pm 1.30}$
        & 64.44$_{\pm 1.32}$
        & 61.82 \\

        Muon
        & 73.72$_{\pm 0.44}$
        & 56.14$_{\pm 1.45}$
        & 62.83$_{\pm 0.39}$
        & 47.44$_{\pm 1.49}$
        & 68.90$_{\pm 1.30}$
        & \underline{69.83$_{\pm 1.26}$}
        & 63.14 \\

        HTMuon
        & \underline{73.75$_{\pm 0.44}$}
        & \underline{56.31$_{\pm 1.45}$}
        & \underline{62.93$_{\pm 0.39}$}
        & \underline{47.50$_{\pm 1.48}$}
        & \underline{70.24$_{\pm 1.30}$}
        & 69.22$_{\pm 1.27}$
        & \underline{63.33} \\

        \textbf{Soren (Ours)}
        & \textbf{74.31$_{\pm 0.44}$}
        & \textbf{57.17$_{\pm 1.45}$}
        & \textbf{65.62$_{\pm 0.38}$}
        & \textbf{48.14$_{\pm 1.50}$}
        & \textbf{71.11$_{\pm 1.27}$}
        & \textbf{72.55$_{\pm 1.23}$}
        & \textbf{64.82} \\

        \bottomrule
    \end{tabular}%
    }
\end{table}

Soren achieves the highest score on all six evaluation benchmarks as well as the highest average score of 64.82, outperforming AdamW, Muon, and HTMuon by 3.00, 1.68, and 1.49 points, respectively. Compared with HTMuon, Soren improves performance by 0.56 points on HellaSwag, 0.86 points on ARC, 2.69 points on MMLU, 0.64 points on TruthfulQA, 0.87 points on Winogrande, and 2.72 points on GSM8K. Notably, the gains are particularly pronounced on MMLU and GSM8K, demonstrating consistent improvements across diverse capabilities, including commonsense reasoning, knowledge-intensive understanding, factuality, and mathematical reasoning. The sigmoid spectral reshaping in Soren smoothly compresses dominant singular modes while preserving relative differences among weaker modes, which can produce more balanced gradient updates during fine-tuning and contribute to the observed performance improvements.

\subsection{LLM Direct Preference Optimization}

\paragraph{Experimental Setup.}
We conduct direct preference optimization experiments starting from the
HuggingFaceH4/mistral-7b-sft-alpha checkpoint, a supervised
fine-tuned version of Mistral-7B-v0.1~\citep{jiang2023mistral}.
We use this SFT checkpoint as the initial policy for subsequent preference
optimization stage. Specifically, we perform DPO on the
UltraFeedback Binarized dataset~\citep{pmlr-v235-cui24f},
which provides preference pairs consisting of chosen and rejected responses.
For evaluation, we follow the same evaluation protocol and benchmark suite
used in the SFT experiments described above.

\begin{table}[htbp]
    \centering
    \small
    \caption{Evaluation results of different optimizers for DPO on Mistral 7B. Detailed hyperparameter configurations are reported in Table~\ref{tab:sft_dpo_hyperparameters} in Appendix~\ref{hyperparam}.}
    \label{tab:mistral_dpo}

    \resizebox{\linewidth}{!}{%
    \begin{tabular}{lccccccc}
        \toprule
        \textbf{Method}
        & \textbf{HellaSwag}
        & \textbf{ARC}
        & \textbf{MMLU}
        & \textbf{TruthfulQA}
        & \textbf{Winogrande}
        & \textbf{GSM8K}
        & \textbf{Average} \\
        \midrule

        AdamW
        & 80.69$_{\pm 0.36}$
        & 61.09$_{\pm 1.42}$
        & 51.09$_{\pm 0.39}$
        & 45.32$_{\pm 1.58}$
        & 76.40$_{\pm 1.14}$
        & 30.93$_{\pm 1.27}$
        & 57.59 \\

        Muon
        & \underline{81.89$_{\pm 0.38}$}
        & \underline{61.95$_{\pm 1.42}$}
        & 55.07$_{\pm 0.40}$
        & \textbf{49.56$_{\pm 1.57}$}
        & \underline{77.35$_{\pm 1.18}$}
        & \underline{31.69$_{\pm 1.28}$}
        & \underline{59.59} \\

        HTMuon
        & 81.85$_{\pm 0.38}$
        & 61.86$_{\pm 1.42}$
        & \underline{55.13$_{\pm 0.40}$}
        & 49.52$_{\pm 1.57}$
        & 76.95$_{\pm 1.18}$
        & 31.31$_{\pm 1.28}$
        & 59.44 \\

        \textbf{Soren (Ours)}
        & \textbf{83.25$_{\pm 0.37}$}
        & \textbf{62.46$_{\pm 1.42}$}
        & \textbf{57.21$_{\pm 0.39}$}
        & \underline{49.54$_{\pm 1.56}$}
        & \textbf{77.74$_{\pm 1.17}$}
        & \textbf{36.32$_{\pm 1.32}$}
        & \textbf{61.09} \\

        \bottomrule
    \end{tabular}%
    }
\end{table}

On Mistral-7B, Soren achieves the highest average performance of 61.09, outperforming AdamW, Muon, and HTMuon by 3.50, 1.50, and 1.65 points, respectively. Soren also achieves the best performance on five of the six benchmarks, including HellaSwag (83.25), ARC (62.46), MMLU (57.21), Winogrande (77.74), and GSM8K (36.32), while remaining highly competitive on TruthfulQA (49.54), where it is only marginally below Muon (49.56). Compared with HTMuon, Soren yields improvements of 1.40 points on HellaSwag, 0.60 points on ARC, 2.08 points on MMLU, 0.79 points on Winogrande, and 5.01 points on GSM8K. The particularly large gains on MMLU and GSM8K demonstrate the effectiveness of Soren for preference optimization across knowledge-intensive and mathematical reasoning tasks.

\subsection{Singular Spectrum Analysis}
\label{sec:spectrum}

Figure~\ref{fig:spectrum_layer} visualizes the singular spectra of representative gradient matrices from the first and final layers of the LLaMA-60M model at different stages of training. As shown in the figure, the original gradient exhibits a pronounced spectral decay, with a large gap between dominant and weaker singular modes, causing the update to be disproportionately influenced by a small number of high-magnitude directions while suppressing the relative contribution of weaker modes. In contrast, Muon largely flattens the spectrum by assigning nearly equal magnitude to the active singular directions, removing much of the relative spectral structure of the original gradient. Soren exhibits an intermediate behavior: it substantially compresses the gap between large and small singular values while preserving their ordering and non-uniformity. Thus, dominant modes remain the primary contributors to the update, but their relative dominance is reduced, allowing weaker modes to contribute more substantially without being forced to the same magnitude. This spectral behavior, consistently observed across layers and training stages, highlights Soren's ability to mitigate excessive spectral imbalance while retaining informative relative differences among singular directions.

\begin{figure}[t]
    \centering

    \begin{subfigure}[b]{0.32\textwidth}
        \centering
        \includegraphics[width=\textwidth]{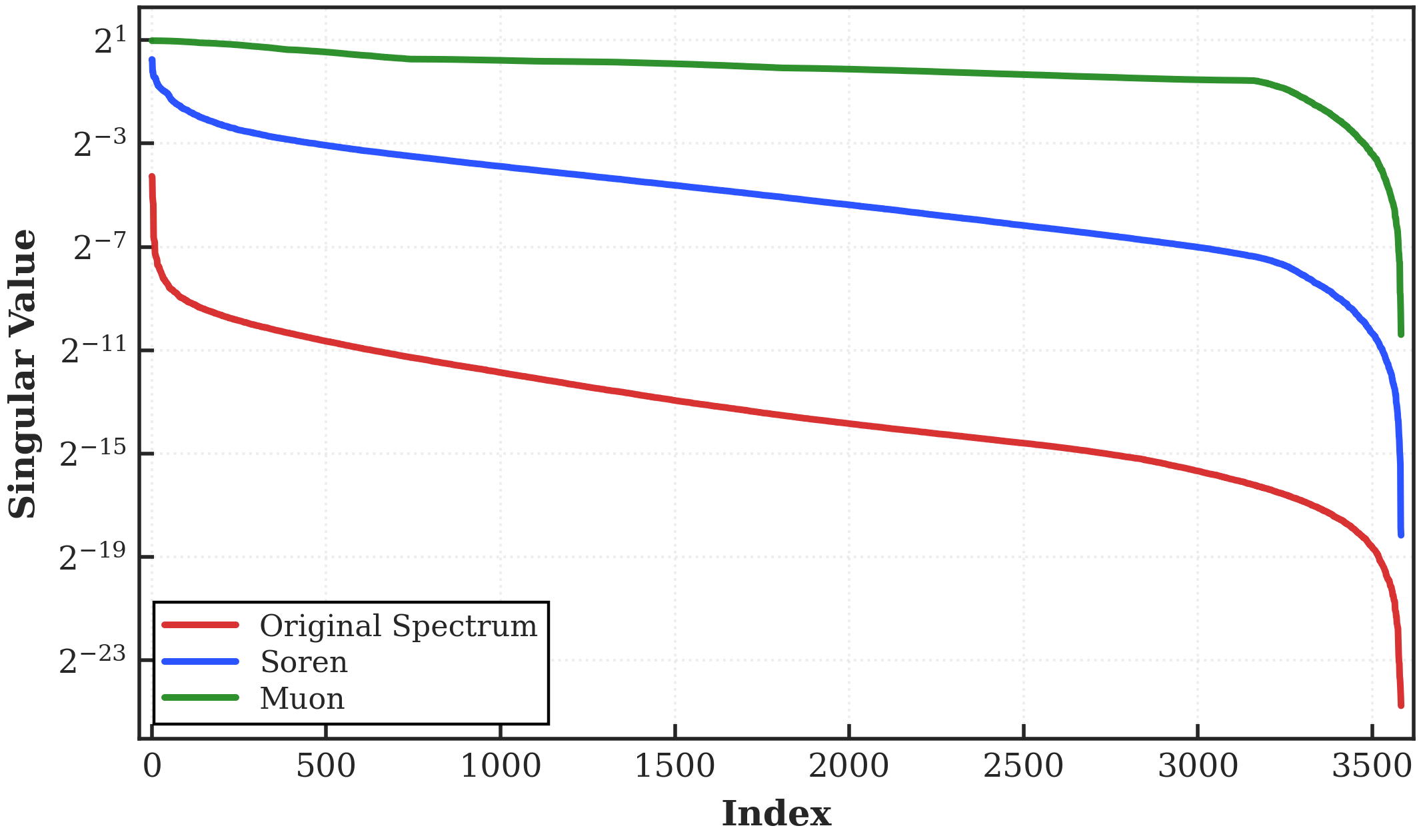}
        \caption{Step 1}
        \label{fig:spectrum_layer0_step1}
    \end{subfigure}
    \hfill
    \begin{subfigure}[b]{0.32\textwidth}
        \centering
        \includegraphics[width=\textwidth]{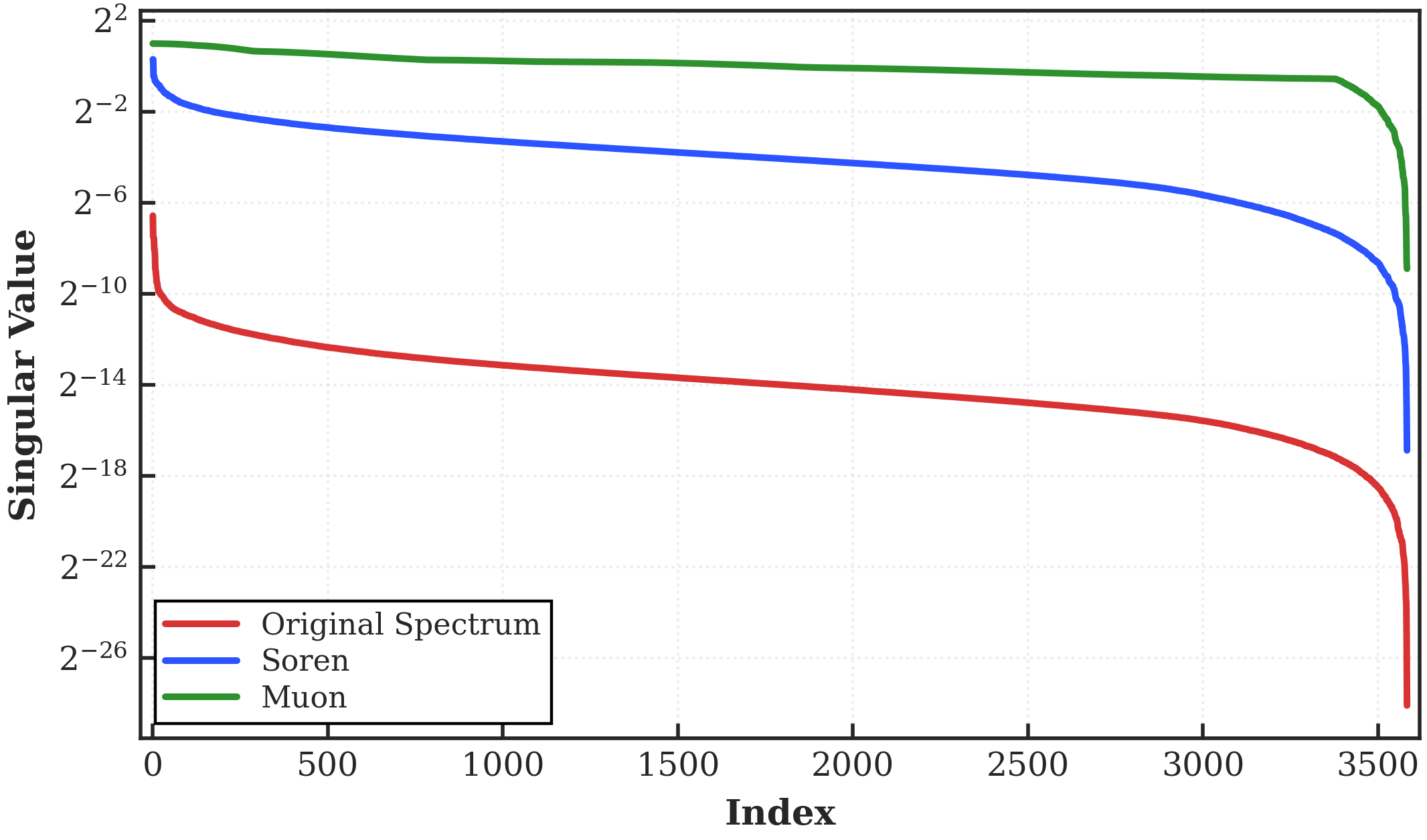}
        \caption{Step 2500}
        \label{fig:spectrum_layer0_step2500}
    \end{subfigure}
    \hfill
    \begin{subfigure}[b]{0.32\textwidth}
        \centering
        \includegraphics[width=\textwidth]{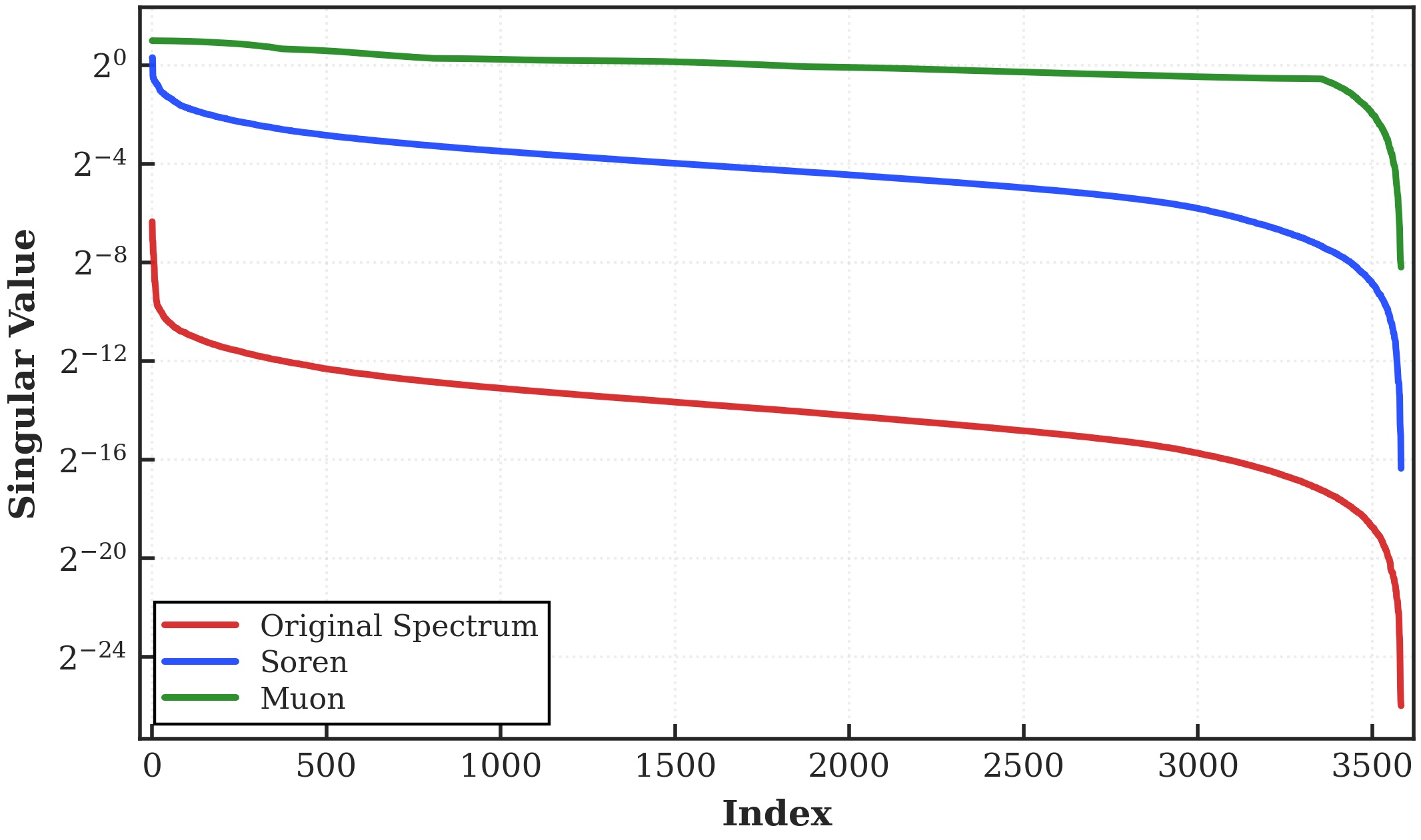}
        \caption{Step 5000}
        \label{fig:spectrum_layer0_step5000}
    \end{subfigure}

    \vspace{4pt}

    \begin{subfigure}[b]{0.32\textwidth}
        \centering
        \includegraphics[width=\textwidth]{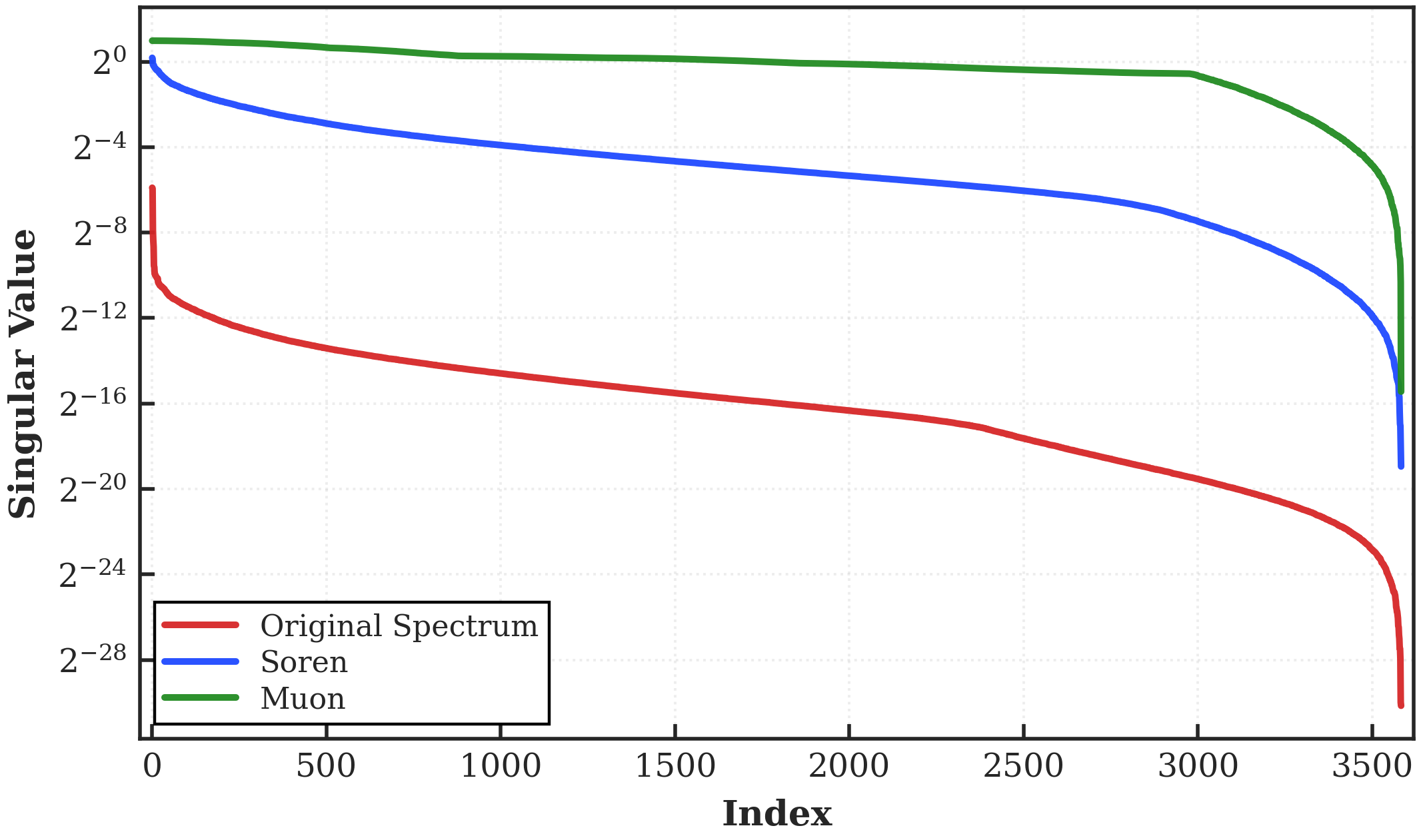}
        \caption{Step 1}
        \label{fig:spectrum_layer7_step1}
    \end{subfigure}
    \hfill
    \begin{subfigure}[b]{0.32\textwidth}
        \centering
        \includegraphics[width=\textwidth]{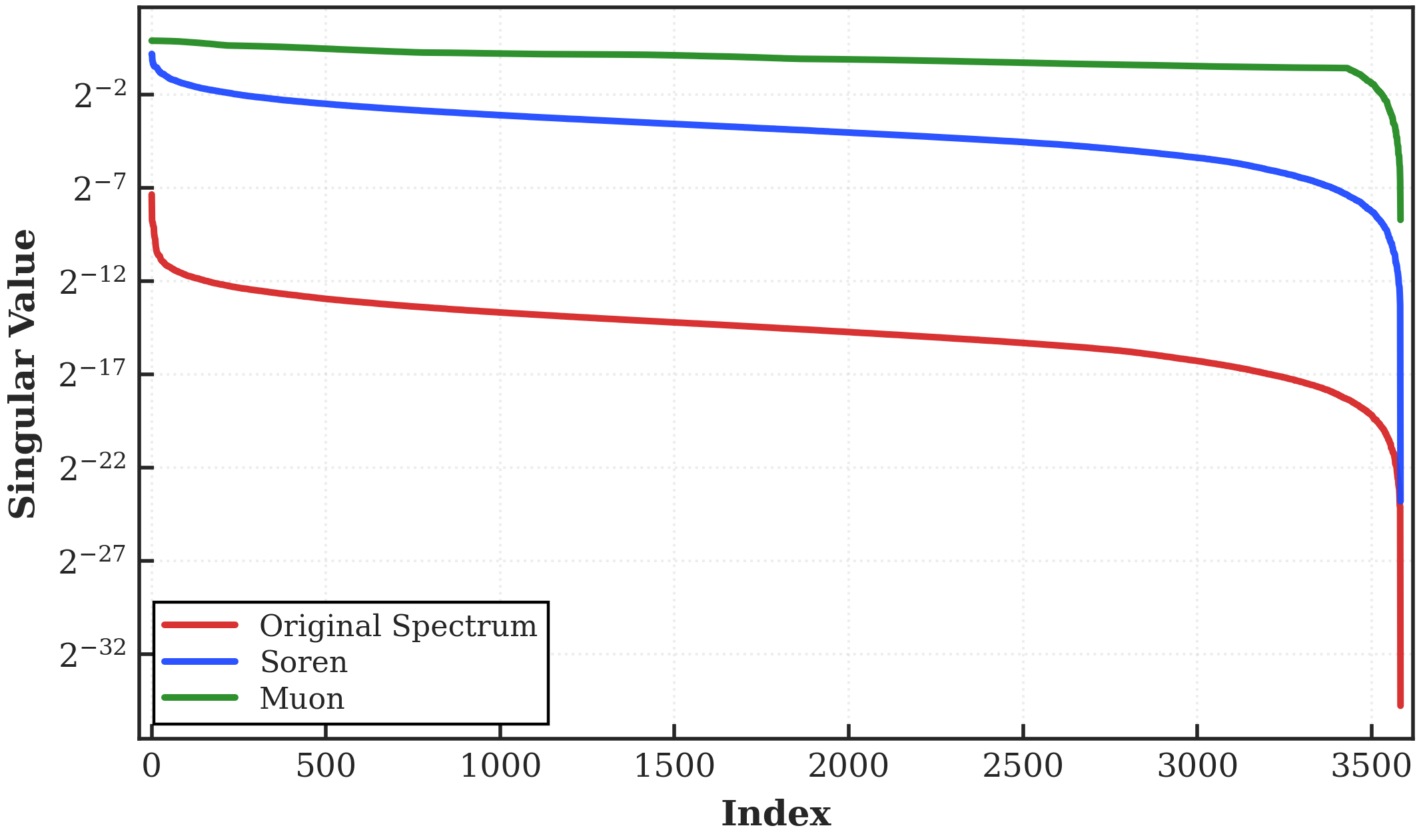}
        \caption{Step 2500}
        \label{fig:spectrum_layer7_step2500}
    \end{subfigure}
    \hfill
    \begin{subfigure}[b]{0.32\textwidth}
        \centering
        \includegraphics[width=\textwidth]{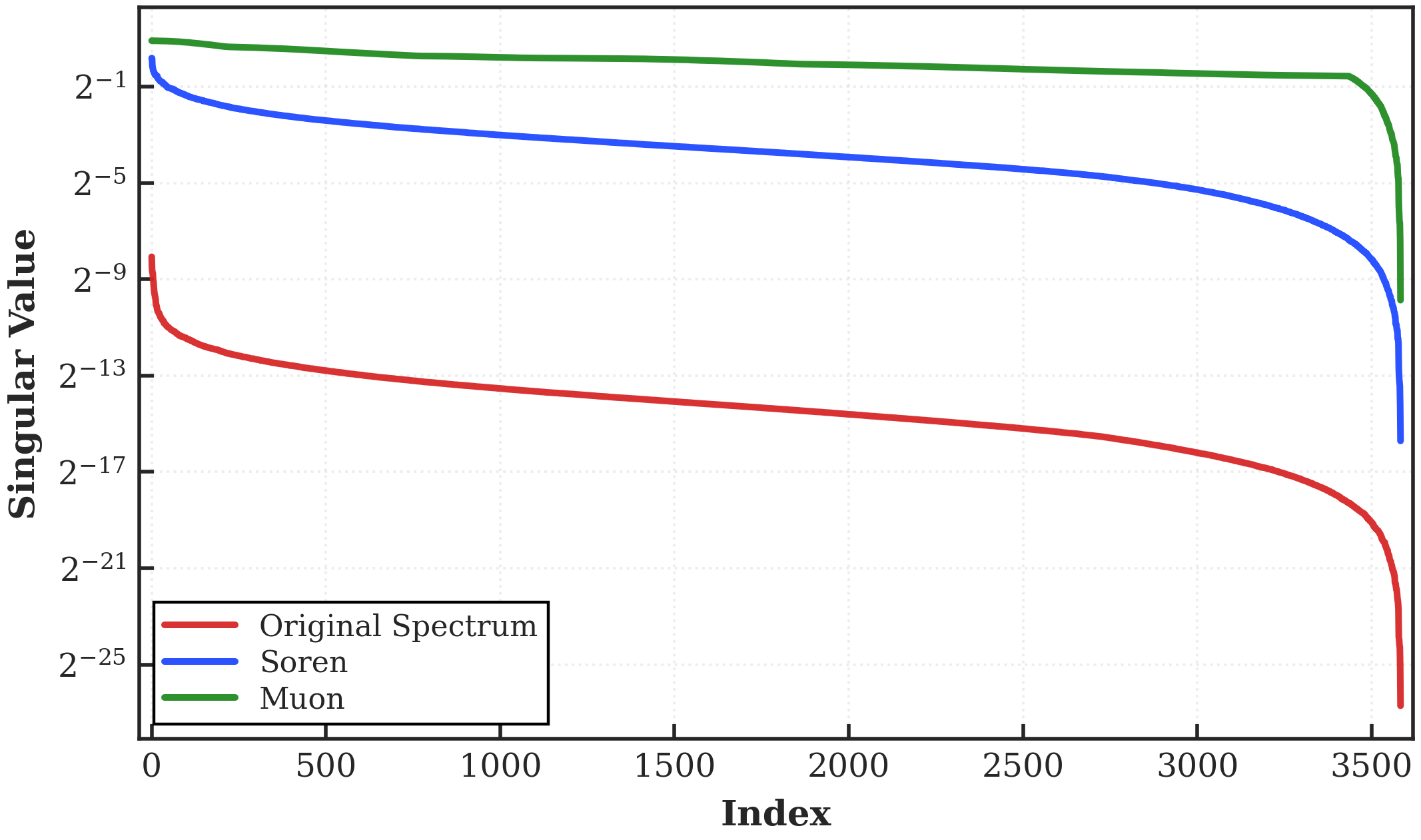}
        \caption{Step 5000}
        \label{fig:spectrum_layer7_step5000}
    \end{subfigure}

    \caption{Spectral transformation of a representative gradient matrix from the Llama 60M model, with singular values sorted in descending order. The top and bottom rows correspond to the first and final layers, respectively, while the columns show the initial, intermediate, and final training steps. Soren smoothly compresses dominant modes while preserving weaker-mode structure, in contrast to Muon-style orthogonalization, which flattens active modes toward a nearly uniform scale.}
    \label{fig:spectrum_layer}
\end{figure}

\subsection{Soft Newton--Schulz Iteration Analysis}
\label{subsec:sns_iteration_analysis}

We empirically evaluate how closely the finite-step Soft Newton--Schulz (SNS) iteration approximates the exact sigmoid spectral map using the maximum relative error (MaxRelErr). Specifically, for singular values $x\in\mathcal{X}\subset[0.01,1]$, we define
$s(x)=\operatorname{sigmoid}(x)$ and denote the corresponding SNS approximation by $\widehat{s}(x)$. We measure the maximum pointwise relative error as:
\[
\mathrm{MaxRelErr}
=
\max_{x\in\mathcal{X}}
\frac{|\widehat{s}(x)-s(x)|}{s(x)}.
\]

\begin{figure}[htbp]
    \centering

    \begin{subfigure}[t]{0.49\linewidth}
        \centering
        \includegraphics[width=\linewidth]{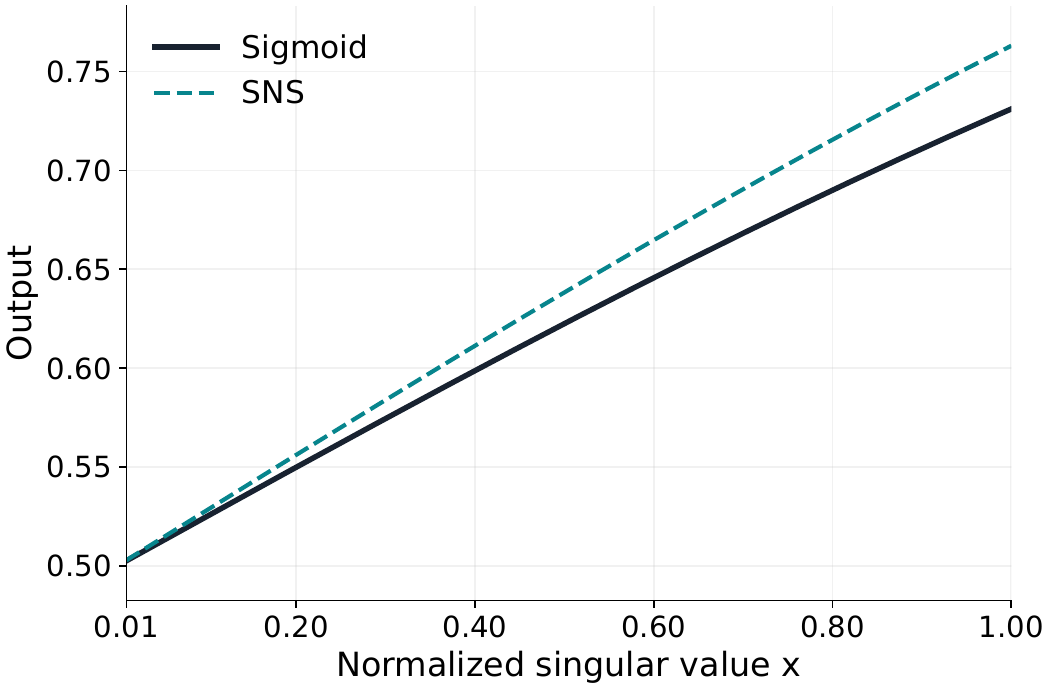}
        \caption{SNS and sigmoid on normalized singular values.}
        \label{fig:sns_scalar_comparison}
    \end{subfigure}
    \hfill
    \begin{subfigure}[t]{0.49\linewidth}
        \centering
        \includegraphics[width=\linewidth]{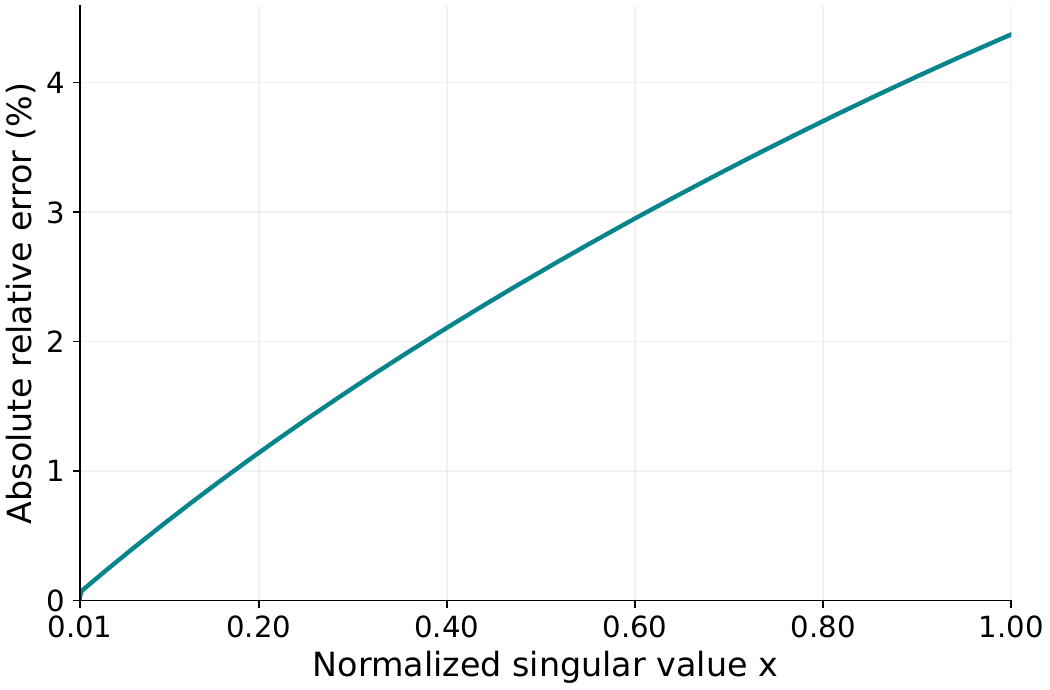}
        \caption{Absolute relative error $|\widehat{s}(x)-s(x)|/s(x)$.}
        \label{fig:sns_scalar_relative_error}
    \end{subfigure}

    \caption{Comparison between SNS and the sigmoid approximation.}
    \label{fig:sns_comparison}
\end{figure}

Figure~\ref{fig:sns_scalar_comparison} compares the two scalar mappings, while Figure~\ref{fig:sns_scalar_relative_error} shows their pointwise relative error. Over the evaluated range, the SNS approximation closely follows the exact sigmoid mapping, with a maximum relative error of $4.37\%$. This metric has the same form as the relative spectral error introduced in Eq.~(\plaineqref{eq:relative_spectral_error}), providing an empirical assessment of the approximation quality at the scalar level. Beyond this scalar-level analysis, we provide spectral comparisons on LLM training in Appendix~\ref{app:soren_exact_spectra} and evaluate mean and maximum absolute errors on CNN gradients in Appendix~\ref{app:sns_metrics}.

\section{Conclusion}
\label{sec:conclusion}

We introduced \emph{Soren}, a spectral reshaping optimizer that applies a
sigmoid transformation to the singular values of gradient updates, providing
a smooth alternative to both magnitude-preserving and spectral
orthogonalization updates. We established convergence guarantees under
relative smoothness and Polyak--\L{}ojasiewicz geometry, and developed a
finite-step Soft Newton--Schulz realization that avoids explicit SVD
computation. Experiments across language model pre-training, supervised
fine-tuning, and direct preference optimization demonstrate that Soren
achieves competitive or improved performance over established optimizers.
Overall, our results show that controlled spectral reshaping is a promising
approach for optimization in neural language model training and alignment.

\bibliography{iclr2027_conference}
\bibliographystyle{iclr2027_conference}

\appendix

\section{Mathematical proofs and derivations}

\subsection{Exact spectral preconditioner and vectorization}
\label{subsec:app_preconditioner}
This appendix derives the preconditioned form used in Section~\ref{convergence}. Let $G_t\in\mathbb{R}^{m\times n}$ with $m\le n$ have full row rank and thin SVD
\[
G_t=U_t\Sigma_tV_t^\top,
\]
where $U_t\in\mathbb{R}^{m\times m}$ is orthogonal, $V_t\in\mathbb{R}^{n\times m}$ has orthonormal columns, and $\Sigma_t=\operatorname{diag}(\sigma_{t,1},\ldots,\sigma_{t,m})$ with $\sigma_{t,i}>0$. The exact spectral Soren map is
\[
s(G_t)=U_ts(\Sigma_t)V_t^\top.
\]
Using Eq.~(\plaineqref{eq:sigmoid_preconditioner}),
\begin{align}
P_tG_t = U_ts(\Sigma_t)\Sigma_t^{-1}U_t^\top U_t\Sigma_tV_t^\top
= U_ts(\Sigma_t)V_t^\top =s(G_t).
\end{align}
Since $s(\sigma_{t,i})>0$ and $\sigma_{t,i}>0$, every eigenvalue $s(\sigma_{t,i})/\sigma_{t,i}$ of $P_t$ is positive, hence $P_t\succ0$.

Let $w_t=\operatorname{vec}(W_t)$ and $g_t=\operatorname{vec}(G_t)$. The identity
\[
\operatorname{vec}(ABC)=(C^\top\otimes A)\operatorname{vec}(B)
\]
gives
\begin{align}
\operatorname{vec}(P_tG_t) =\operatorname{vec}(P_tG_tI_n)
=(I_n\otimes P_t)\operatorname{vec}(G_t)
=\mathcal{P}_tg_t.
\end{align}
Therefore the matrix update $W_t=W_{t-1}-\eta_tP_tG_t$ is equivalent to Eq.~(\plaineqref{eq:sigmoid_vector_update}).

\subsection{Metric relative smoothness and metric PL geometry}
\label{subsec:app_metric_geometry}
Fix an iteration $t$ and hold $\mathcal{P}_t$ constant. The associated quadratic reference function is
\[
h_t(w)=\frac12w^\top\mathcal{P}_t^{-1}w.
\]
Then $\nabla^2h_t(w)=\mathcal{P}_t^{-1}$. Thus
\[
H(z)\preceq\widetilde{\beta}_t\mathcal{P}_t^{-1}
\]
is precisely the Hessian form of relative smoothness with respect to the quadratic reference $h_t$~\citep{bauschke2017relativecanonical,lu2018relativecanonical}.

The primal metric norm and its dual norm are
\[
\|d\|_{\mathcal{P}_t^{-1}}^2=d^\top\mathcal{P}_t^{-1}d, \qquad \|g\|_{\mathcal{P}_t}^2=g^\top\mathcal{P}_tg.
\]
For fixed $\mathcal{P}_t$, the linear coordinate transformation $w=\mathcal{P}_t^{1/2}y$ gives
\[
\nabla_y\mathcal{L}(\mathcal{P}_t^{1/2}y) =\mathcal{P}_t^{1/2}\nabla_w\mathcal{L}(w),
\]
and hence
\[
\|\nabla_y\mathcal{L}\|_2^2 =\nabla_w\mathcal{L}(w)^\top\mathcal{P}_t\nabla_w\mathcal{L}(w).
\]
The standard PL/gradient-dominance condition~\citep{karimi2016ecmlpkddcanonical}, applied to the transformed objective $y\mapsto\mathcal{L}(\mathcal{P}_t^{1/2}y)$, therefore takes the form of Eq.~(\plaineqref{eq:alpha_def}) in the original coordinates. This change of coordinates explains the metric formulation; positivity of the metric PL constant remains an assumption of our analysis.

We next derive the descent inequality used in Theorem~\ref{thm:exact_soren_convergence}. Let $w,w'=w+d\in\Omega_t$. Since $\Omega_t$ is convex, $w+\tau d\in\Omega_t$ for $\tau\in[0,1]$. The integral form of Taylor's theorem gives
\begin{equation}
\mathcal{L}(w+d) = \mathcal{L}(w) +\nabla\mathcal{L}(w)^\top d +\int_0^1(1-\tau)d^\top H(w+\tau d)d\,d\tau.
\label{eq:app_integral_taylor}
\end{equation}
By Eq.~(\plaineqref{eq:beta_def}),
\[
d^\top H(w+\tau d)d \le \widetilde{\beta}_t d^\top\mathcal{P}_t^{-1}d.
\]
Using $\int_0^1(1-\tau)d\tau=1/2$ in Eq.~(\plaineqref{eq:app_integral_taylor}),
\begin{equation}
\mathcal{L}(w+d) \le \mathcal{L}(w) +\nabla\mathcal{L}(w)^\top d +\frac{\widetilde{\beta}_t}{2} \|d\|_{\mathcal{P}_t^{-1}}^2.
\label{eq:app_metric_descent}
\end{equation}
This inequality follows directly from the Hessian bound along the segment and is valid without a quadratic approximation of the objective.

\subsection{Proof of Theorem~\ref{thm:exact_soren_convergence}}
\label{subsec:app_exact_convergence}
\begin{proof}
Set
\[
d_t=w_t-w_{t-1}=-\eta_t\mathcal{P}_tg_t, \qquad g_t=\nabla\mathcal{L}(w_{t-1}).
\]
Applying Eq.~(\plaineqref{eq:app_metric_descent}) at $w_{t-1}$ gives
\begin{align}
\mathcal{L}(w_t) &\le \mathcal{L}(w_{t-1}) -\eta_tg_t^\top\mathcal{P}_tg_t +\frac{\widetilde{\beta}_t}{2} \eta_t^2 (\mathcal{P}_tg_t)^\top \mathcal{P}_t^{-1} (\mathcal{P}_tg_t)\\
&= \mathcal{L}(w_{t-1}) -\eta_t\left(1-\frac{\eta_t\widetilde{\beta}_t}{2}\right) \|g_t\|_{\mathcal{P}_t}^2.
\label{eq:app_one_step_descent}
\end{align}
With $\eta_t=1/\widetilde{\beta}_t$,
\begin{equation}
\mathcal{L}(w_t) \le \mathcal{L}(w_{t-1}) -\frac{1}{2\widetilde{\beta}_t} \|g_t\|_{\mathcal{P}_t}^2.
\label{eq:app_optimal_metric_step}
\end{equation}
The full-row-rank assumption gives $g_t\ne0$, hence $\mathcal{L}(w_{t-1})>\mathcal{L}^*$, since a differentiable objective has zero gradient at a global minimizer. Combining Eq.~(\plaineqref{eq:app_optimal_metric_step}) with $\mathcal{L}(w_t)\ge\mathcal{L}^*$ and Eq.~(\plaineqref{eq:alpha_def}) therefore yields
\[
\widetilde{\alpha}_t
\le \frac{\tfrac12\|g_t\|_{\mathcal{P}_t}^2}{\mathcal{L}(w_{t-1})-\mathcal{L}^*}
\le \widetilde{\beta}_t.
\]
Thus the contraction factor below lies in $[0,1)$; strict inequality between the two constants is not required.
By Eq.~(\plaineqref{eq:alpha_def}), evaluated at $w_{t-1}$,
\[
\frac12\|g_t\|_{\mathcal{P}_t}^2 \ge \widetilde{\alpha}_t \left(\mathcal{L}(w_{t-1})-\mathcal{L}^*\right).
\]
Substituting this bound into Eq.~(\plaineqref{eq:app_optimal_metric_step}) yields
\[
\mathcal{L}(w_t)-\mathcal{L}^* \le \left(1-\frac{\widetilde{\alpha}_t}{\widetilde{\beta}_t}\right) \left(\mathcal{L}(w_{t-1})-\mathcal{L}^*\right).
\]
Recursively applying the one-step inequality for $t=1,\ldots,T$ proves Eq.~(\plaineqref{eq:exact_product_rate}).

Under Eq.~(\plaineqref{eq:uniform_metric_bounds}),
\[
\frac{\widetilde{\alpha}_t}{\widetilde{\beta}_t} \ge \frac{\overline{\alpha}}{\overline{\beta}},
\]
so every factor in the product satisfies
\[
1-\frac{\widetilde{\alpha}_t}{\widetilde{\beta}_t} \le 1-\frac{\overline{\alpha}}{\overline{\beta}}.
\]
Applying this bound to Eq.~(\plaineqref{eq:exact_product_rate}) proves Eq.~(\plaineqref{eq:exact_uniform_rate}).
\end{proof}

\subsection{SNS metric definitions and comparison bounds}
\label{subsec:app_sns_metric_details}

For the finite-step SNS output in Section~\ref{polynomial}, singular-subspace preservation gives
\begin{equation}
\widehat{s}(G_t)=U_t\widehat{S}_tV_t^\top,
\qquad \widehat{S}_t=\operatorname{diag}(\widehat{s}_{t,1},\ldots,\widehat{s}_{t,m}).
\label{eq:sns_spectral_representation}
\end{equation}
Under the full-row-rank assumption of Section~\ref{convergence}, define
\begin{equation}
\widehat{P}_t=U_t\operatorname{diag}\!\left(\frac{\widehat{s}_{t,i}}{\sigma_{t,i}}\right)U_t^\top,
\qquad \widehat{\mathcal{P}}_t=I_n\otimes\widehat{P}_t.
\label{eq:sns_preconditioner}
\end{equation}
Then $\widehat{s}(G_t)=\widehat{P}_tG_t$. The relative error bound in Eq.~(\plaineqref{eq:relative_spectral_error}) makes all coefficients positive and hence both preconditioners positive definite.

\begin{lemma}[Preconditioner sandwich]
\label{lemma:preconditioner_sandwich}
Under Eq.~(\plaineqref{eq:relative_spectral_error}), the exact and finite-step preconditioners satisfy:
\begin{equation}
(1-\rho_t)\mathcal{P}_t \preceq \widehat{\mathcal{P}}_t \preceq (1+\rho_t)\mathcal{P}_t,
\label{eq:preconditioner_sandwich}
\end{equation}
and, consequently,
\begin{equation}
\frac{1}{1+\rho_t}\mathcal{P}_t^{-1} \preceq \widehat{\mathcal{P}}_t^{-1} \preceq \frac{1}{1-\rho_t}\mathcal{P}_t^{-1}.
\label{eq:inverse_preconditioner_sandwich}
\end{equation}
\end{lemma}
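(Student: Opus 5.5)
The plan is to reduce both sandwiches to scalar comparisons of eigenvalues, using the fact that $P_t$ and $\widehat{P}_t$ are simultaneously diagonalized by the same orthogonal matrix $U_t$. From Eq.~(\plaineqref{eq:sigmoid_preconditioner}) and Eq.~(\plaineqref{eq:sns_preconditioner}),
\[
\widehat{P}_t-(1-\rho_t)P_t = U_t\operatorname{diag}\!\left(\frac{\widehat{s}_{t,i}-(1-\rho_t)s(\sigma_{t,i})}{\sigma_{t,i}}\right)U_t^\top .
\]
The relative error bound Eq.~(\plaineqref{eq:relative_spectral_error}) gives $(1-\rho_t)s(\sigma_{t,i})\le\widehat{s}_{t,i}\le(1+\rho_t)s(\sigma_{t,i})$ for every $i$. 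Since $\sigma_{t,i}>0$ under the full-row-rank assumption, every diagonal entry above is nonnegative. Hence $(1-\rho_t)P_t\preceq\widehat{P}_t$, and the upper bound $\widehat{P}_t\preceq(1+\rho_t)P_t$ follows the same way.

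To lift this to the Kronecker form, I will use that for $A\succeq 0$ the matrix $I_n\otimes A$ is block diagonal with $n$ copies of $A$, so it is positive semidefinite. Applying this with $A=\widehat{P}_t-(1-\rho_t)P_t$ and with $A=(1+\rho_t)P_t-\widehat{P}_t$, and using linearity $I_n\otimes(cX+Y)=c(I_n\otimes X)+I_n\otimes Y$, yields Eq.~(\plaineqref{eq:preconditioner_sandwich}).

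For the inverse sandwich, I would avoid the general operator-monotonicity of inversion and again work spectrally. Since $\rho_t<1$, all eigenvalues of $\widehat{P}_t$ are positive, so $\widehat{P}_t^{-1}=U_t\operatorname{diag}(\sigma_{t,i}/\widehat{s}_{t,i})U_t^\top$. Inverting the scalar bounds on $\widehat{s}_{t,i}$ gives
\[
\frac{1}{1+\rho_t}\frac{\sigma_{t,i}}{s(\sigma_{t,i})}\le\frac{\sigma_{t,i}}{\widehat{s}_{t,i}}\le\frac{1}{1-\rho_t}\frac{\sigma_{t,i}}{s(\sigma_{t,i})} .
\]
The same diagonalization argument then gives the sandwich for $\widehat{P}_t^{-1}$ against $P_t^{-1}$. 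It lifts to $\widehat{\mathcal{P}}_t^{-1}=I_n\otimes\widehat{P}_t^{-1}$ via $(I_n\otimes X)^{-1}=I_n\otimes X^{-1}$, which proves Eq.~(\plaineqref{eq:inverse_preconditioner_sandwich}).

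The main point needing care is justifying that the SNS output shares the singular vectors of $G_t$, i.e.\ Eq.~(\plaineqref{eq:sns_spectral_representation}). I take this as given from the paper. It holds because $\Phi$ maps $U\operatorname{diag}(x)V^\top$ to $U\operatorname{diag}(\tfrac12(3x-x^3))V^\top$, and both streams $Q_0$ and $T_0$ are scalar multiples of $G_t$. Once this common basis is in hand, everything else is elementwise scalar algebra.
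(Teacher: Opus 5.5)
Your proof is correct. The first sandwich and its lift to $\mathcal{P}_t$ follow the paper's argument exactly: compare $\widehat{s}_{t,i}/\sigma_{t,i}$ with $s(\sigma_{t,i})/\sigma_{t,i}$ in the common eigenbasis $U_t$, then use that $I_n\otimes(\cdot)$ preserves the Loewner order.

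The only divergence is the inverse sandwich. The paper applies the general fact that $A\mapsto A^{-1}$ reverses the Loewner order on positive-definite matrices directly to $(1-\rho_t)\mathcal{P}_t\preceq\widehat{\mathcal{P}}_t\preceq(1+\rho_t)\mathcal{P}_t$. You instead invert the scalar bounds and diagonalize again. The paper's step is basis-free, so it would still work if the approximate preconditioner did not commute with $P_t$. Yours is fully elementary but depends on simultaneous diagonalization, which holds here by construction.

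On the point you flag as needing care: the inequalities themselves need only the definition of $\widehat{P}_t$ in Eq.~(\ref{eq:sns_preconditioner}), which is already written in the basis $U_t$. Singular-subspace preservation of SNS is what makes the coefficients $\widehat{s}_{t,i}$ well defined and gives $\widehat{P}_tG_t=\widehat{s}(G_t)$. It is not an ingredient of the sandwich inequalities.
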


To compare the two metrics under the same local geometry, we use the same region $\Omega_t$ and optimal value $\mathcal{L}^*$ to define the SNS constants:
\begin{align}
\widehat{\beta}_t &= \inf\left\{\beta>0: H(z)\preceq\beta\widehat{\mathcal{P}}_t^{-1} \ \text{for all }z\in\Omega_t\right\},
\label{eq:sns_beta_def}\\
\widehat{\alpha}_t &= \inf_{\substack{w\in\Omega_t\\ \mathcal{L}(w)>\mathcal{L}^*}} \frac{\tfrac12\|\nabla\mathcal{L}(w)\|_{\widehat{\mathcal{P}}_t}^2} {\mathcal{L}(w)-\mathcal{L}^*}.
\label{eq:sns_alpha_def}
\end{align}

The individual metric bounds underlying Theorem~\ref{thm:sns_geometry_preservation} are
\begin{align}
(1-\rho_t)\widetilde{\alpha}_t &\le \widehat{\alpha}_t \le (1+\rho_t)\widetilde{\alpha}_t,
\label{eq:alpha_sandwich}\\
(1-\rho_t)\widetilde{\beta}_t &\le \widehat{\beta}_t \le (1+\rho_t)\widetilde{\beta}_t.
\label{eq:beta_sandwich}
\end{align}

The following subsections establish the spectral representation, preconditioner sandwich, and metric bounds, then prove the theorem and convergence corollary.

\subsection{Singular-subspace preservation of finite-step SNS}
\label{subsec:app_sns_subspace}
Represent a stream iterate in the fixed singular-vector basis of its input as
\[
X_k=UD_kV^\top, \qquad D_k=\operatorname{diag}(d_{k,1},\ldots,d_{k,m}).
\]
The diagonal entries of $D_k$ are spectral coefficients and need not be nonnegative after a polynomial step. Since $U$ is orthogonal and $V$ has orthonormal columns,
\[
X_kX_k^\top=UD_k^2U^\top.
\]
Using Eq.~(\plaineqref{eq:sns_recurrence}),
\[
\Phi(X_k)=\frac12(3I-UD_k^2U^\top)UD_kV^\top =U\left[\frac12(3I-D_k^2)D_k\right]V^\top.
\]
Thus, the recurrence acts only on the diagonal coefficients in the fixed basis $U,V$. Both $Q_0$ and $T_0$ are scalar multiples of $G_t$, so they share the input basis $U_t,V_t$. Induction over any finite number of steps in each stream gives
\[
Q=U_tD_QV_t^\top, \qquad T=U_tD_TV_t^\top.
\]
Consequently,
\[
\frac12(Q+T) = U_t\left[\frac12(D_Q+D_T)\right]V_t^\top,
\]
which proves Eq.~(\plaineqref{eq:sns_spectral_representation}).

\subsection{SNS preconditioner representation and sandwich}
\label{subsec:app_sns_sandwich}
From Eq.~(\plaineqref{eq:sns_preconditioner}),
\[
\widehat{P}_tG_t =U_t\operatorname{diag}\!\left(\frac{\widehat{s}_{t,i}}{\sigma_{t,i}}\right)U_t^\top U_t\Sigma_tV_t^\top =U_t\widehat{S}_tV_t^\top =\widehat{s}(G_t).
\]

Equation~(\plaineqref{eq:relative_spectral_error}) is equivalent to
\[
(1-\rho_t)s(\sigma_{t,i}) \le \widehat{s}_{t,i} \le (1+\rho_t)s(\sigma_{t,i}).
\]
Dividing by $\sigma_{t,i}>0$ and using the common eigenbasis $U_t$ gives
\[
(1-\rho_t)P_t \preceq \widehat{P}_t \preceq (1+\rho_t)P_t.
\]
Taking the Kronecker product with $I_n\succeq0$ preserves Loewner order and proves Eq.~(\plaineqref{eq:preconditioner_sandwich}). Since every matrix is positive definite and inversion reverses Loewner order,
\[
\widehat{\mathcal{P}}_t\preceq(1+\rho_t)\mathcal{P}_t \quad\Longrightarrow\quad \frac{1}{1+\rho_t}\mathcal{P}_t^{-1} \preceq \widehat{\mathcal{P}}_t^{-1},
\]
and
\[
(1-\rho_t)\mathcal{P}_t\preceq\widehat{\mathcal{P}}_t \quad\Longrightarrow\quad \widehat{\mathcal{P}}_t^{-1} \preceq \frac{1}{1-\rho_t}\mathcal{P}_t^{-1}.
\]
Together these inequalities prove Eq.~(\plaineqref{eq:inverse_preconditioner_sandwich}).

\subsection{Preservation of the metric PL constant}
\label{subsec:app_alpha_preservation}
For every $w\in\Omega_t$ with $\mathcal{L}(w)>\mathcal{L}^*$, Eq.~(\plaineqref{eq:preconditioner_sandwich}) implies
\[
(1-\rho_t) \|\nabla\mathcal{L}(w)\|_{\mathcal{P}_t}^2 \le \|\nabla\mathcal{L}(w)\|_{\widehat{\mathcal{P}}_t}^2 \le (1+\rho_t) \|\nabla\mathcal{L}(w)\|_{\mathcal{P}_t}^2.
\]
Dividing by $2(\mathcal{L}(w)-\mathcal{L}^*)>0$ yields
\[
(1-\rho_t)\frac{\tfrac12\|\nabla\mathcal{L}(w)\|_{\mathcal{P}_t}^2}{\mathcal{L}(w)-\mathcal{L}^*} \le \frac{\tfrac12\|\nabla\mathcal{L}(w)\|_{\widehat{\mathcal{P}}_t}^2}{\mathcal{L}(w)-\mathcal{L}^*} \le (1+\rho_t)\frac{\tfrac12\|\nabla\mathcal{L}(w)\|_{\mathcal{P}_t}^2}{\mathcal{L}(w)-\mathcal{L}^*}.
\]
Taking the infimum over the same set of $w$ proves Eq.~(\plaineqref{eq:alpha_sandwich}).

\subsection{Preservation of the relative-smoothness constant}
\label{subsec:app_beta_preservation}
By the definition of $\widetilde{\beta}_t$,
\[
H(z)\preceq\widetilde{\beta}_t\mathcal{P}_t^{-1}, \qquad \forall z\in\Omega_t.
\]
The left inequality in Eq.~(\plaineqref{eq:inverse_preconditioner_sandwich}) is equivalent to
\[
\mathcal{P}_t^{-1} \preceq (1+\rho_t)\widehat{\mathcal{P}}_t^{-1}.
\]
Hence
\[
H(z) \preceq (1+\rho_t)\widetilde{\beta}_t \widehat{\mathcal{P}}_t^{-1}, \qquad \forall z\in\Omega_t,
\]
so the minimality of $\widehat{\beta}_t$ gives
\[
\widehat{\beta}_t \le (1+\rho_t)\widetilde{\beta}_t.
\]
Conversely, by the definition of $\widehat{\beta}_t$,
\[
H(z)\preceq\widehat{\beta}_t\widehat{\mathcal{P}}_t^{-1}.
\]
The right inequality in Eq.~(\plaineqref{eq:inverse_preconditioner_sandwich}) gives
\[
\widehat{\mathcal{P}}_t^{-1} \preceq \frac{1}{1-\rho_t}\mathcal{P}_t^{-1}.
\]
Therefore
\[
H(z) \preceq \frac{\widehat{\beta}_t}{1-\rho_t}\mathcal{P}_t^{-1},
\]
and the minimality of $\widetilde{\beta}_t$ yields
\[
\widetilde{\beta}_t \le \frac{\widehat{\beta}_t}{1-\rho_t}.
\]
Equivalently,
\[
(1-\rho_t)\widetilde{\beta}_t \le \widehat{\beta}_t.
\]
Combining the two directions proves Eq.~(\plaineqref{eq:beta_sandwich}).

\subsection{Proof of Theorem~\ref{thm:sns_geometry_preservation}}
\label{subsec:app_geometry_preservation}
Equation~(\plaineqref{eq:alpha_sandwich}) follows from Appendix~\ref{subsec:app_alpha_preservation}, and Eq.~(\plaineqref{eq:beta_sandwich}) follows from Appendix~\ref{subsec:app_beta_preservation}. Since all constants are positive,
\[
\frac{\widehat{\alpha}_t}{\widehat{\beta}_t} \ge \frac{(1-\rho_t)\widetilde{\alpha}_t} {(1+\rho_t)\widetilde{\beta}_t} = \frac{1-\rho_t}{1+\rho_t} \frac{\widetilde{\alpha}_t}{\widetilde{\beta}_t},
\]
and similarly
\[
\frac{\widehat{\alpha}_t}{\widehat{\beta}_t} \le \frac{(1+\rho_t)\widetilde{\alpha}_t} {(1-\rho_t)\widetilde{\beta}_t} = \frac{1+\rho_t}{1-\rho_t} \frac{\widetilde{\alpha}_t}{\widetilde{\beta}_t}.
\]
This proves Eq.~(\plaineqref{eq:condition_ratio_sandwich}).

\subsection{Proof of Corollary~\ref{cor:sns_convergence}}
\label{subsec:app_sns_convergence}
\begin{proof}
The assumed segment containment allows us to apply the descent argument of Appendix~\ref{subsec:app_exact_convergence} along the SNS trajectory, using $\widehat{\mathcal{P}}_t$, $\widehat{\alpha}_t$, and $\widehat{\beta}_t$. With $\eta_t=1/\widehat{\beta}_t$,
\[
\mathcal{L}(w_t)-\mathcal{L}^* \le \left(1-\frac{\widehat{\alpha}_t}{\widehat{\beta}_t}\right) \left(\mathcal{L}(w_{t-1})-\mathcal{L}^*\right).
\]
From Theorem~\ref{thm:sns_geometry_preservation}, $\rho_t\le\overline{\rho}$, and the uniform exact bounds,
\begin{align}
\frac{\widehat{\alpha}_t}{\widehat{\beta}_t} \ge \frac{1-\rho_t}{1+\rho_t} \frac{\widetilde{\alpha}_t}{\widetilde{\beta}_t} \ge \frac{1-\overline{\rho}}{1+\overline{\rho}} \frac{\overline{\alpha}}{\overline{\beta}}.
\end{align}
Therefore every contraction factor is bounded above by
\[
1- \frac{1-\overline{\rho}}{1+\overline{\rho}} \frac{\overline{\alpha}}{\overline{\beta}}.
\]
Recursion for $t=1,\ldots,T$ gives Eq.~(\plaineqref{eq:sns_uniform_rate}).
\end{proof}

\section{More Experimental Details}
\label{more_expe}

\subsection{Training and Evaluation Loss Curve}

\begin{figure}[t]
    \centering

    \includegraphics[width=0.49\linewidth]{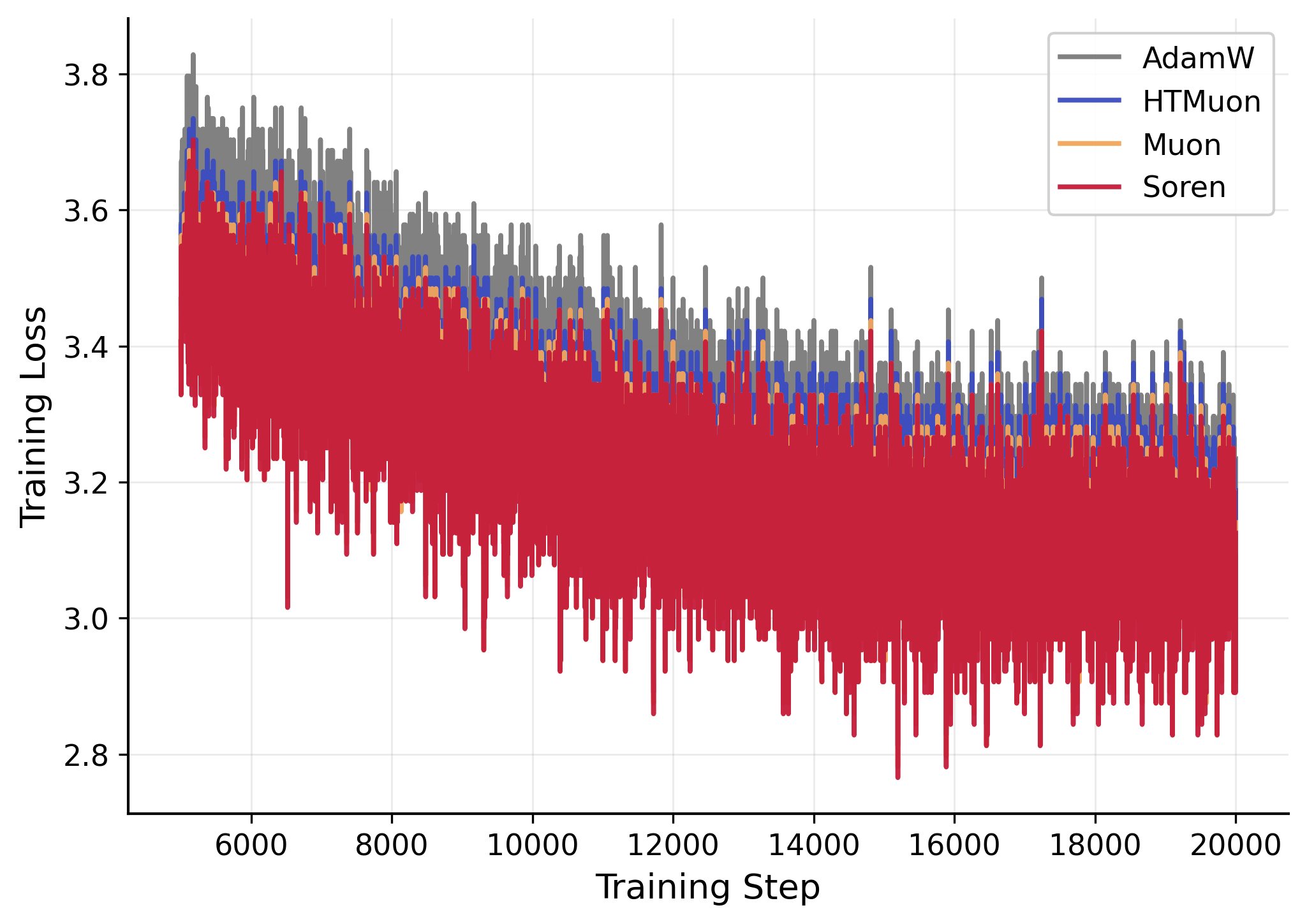}
    \hfill
    \includegraphics[width=0.49\linewidth]{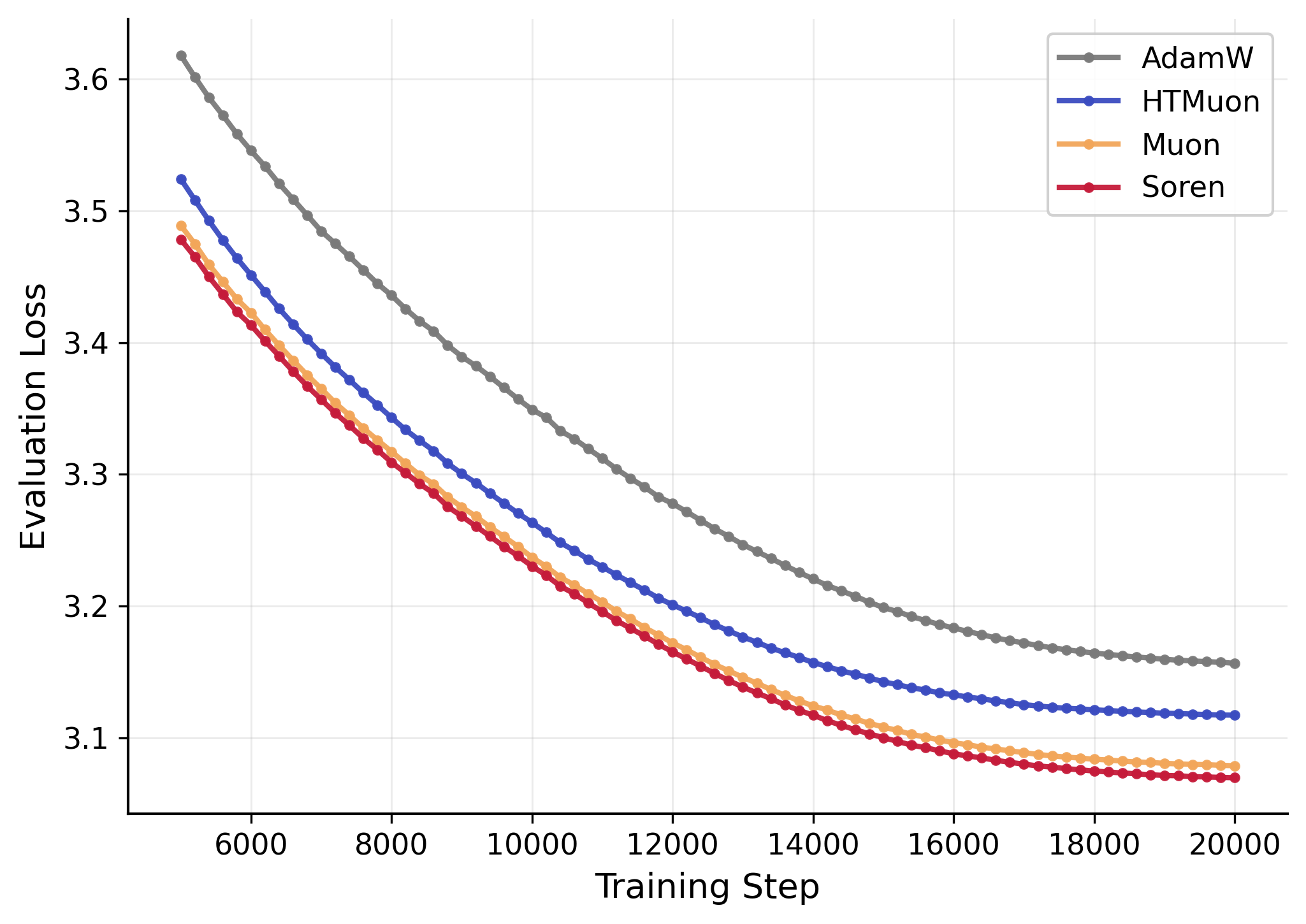}

    \includegraphics[width=0.49\linewidth]{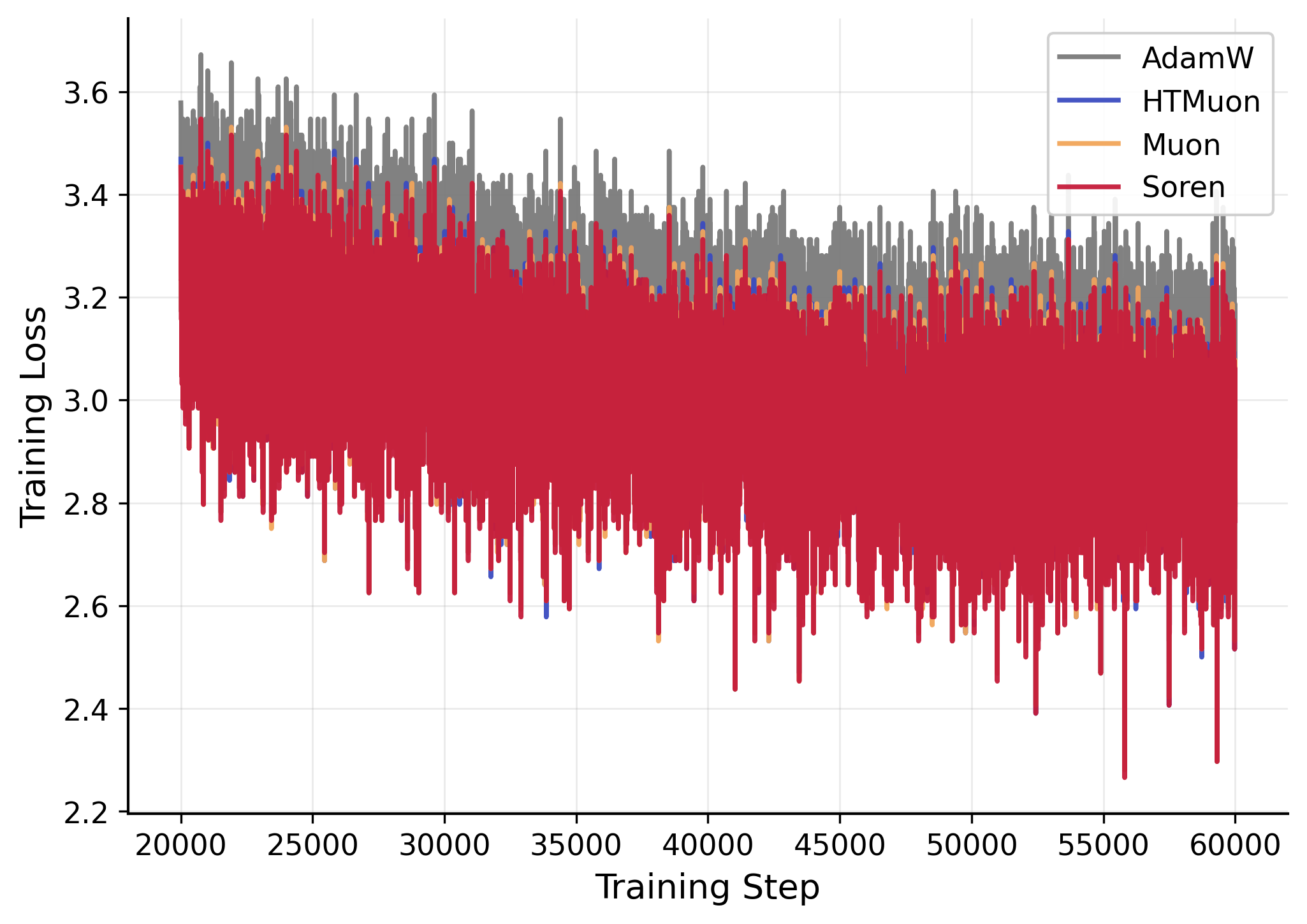}
    \hfill
    \includegraphics[width=0.49\linewidth]{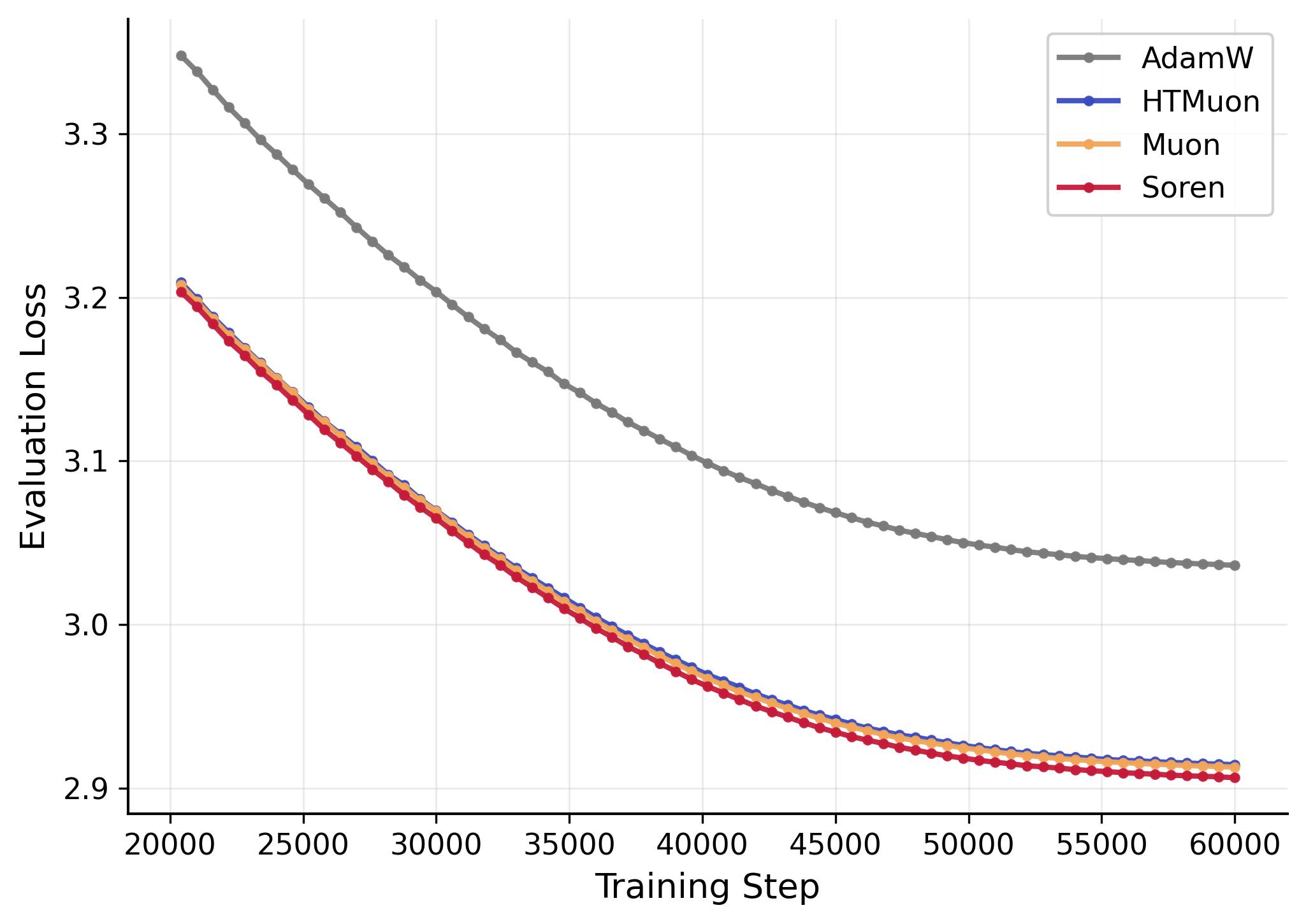}

    \caption{
        Training and evaluation loss curves for the LLaMA-350M (top row) and LLaMA-1B (bottom row) experiments.
    }
    \label{fig:loss1b}
\end{figure}

Figure~\ref{fig:loss1b} presents the training and evaluation loss
curves for LLaMA-350M and LLaMA-1B on the C4 dataset. We observe that Soren
exhibits a convergence trend similar to Muon and HTMuon in terms of training
loss, while achieving a consistently lower evaluation loss in the later stages
of training for both model scales. In particular, the gap between Soren and
the other matrix-based optimizers becomes more apparent toward the end of
training, whereas AdamW maintains a noticeably higher evaluation loss
throughout training. These results indicate that Soren maintains competitive
optimization dynamics while achieving improved generalization performance at
the end of pre-training.

\subsection{More Experiments Results}

\paragraph{Time Complexity}

We measure the wall-clock training time of AdamW, Muon, HTMuon, and Soren to evaluate their practical computational efficiency. We conduct these measurements across pre-training: Llama-60M on C4 using 1 GPU H100, SFT: Qwen2.5-3B on UltraChat 200k using 1 GPU H100, and DPO: Mistral-7B-v0.1 on UltraFeedback Binarized using 2 GPUs H100. For each setting, we adopt the optimal learning rate and hyperparameter configuration reported in Tables~\ref{tab:llama_60m_135m_hyperparameters} and~\ref{tab:sft_dpo_hyperparameters}, respectively.

\begin{table}[htbp]
    \centering
    \caption{Comparison of the wall-clock training time of different optimizers.
    Relative training time is normalized with respect to AdamW.}
    \label{tab:wall_clock_time}

    \begin{tabular}{lcccc}
        \toprule
        & \multicolumn{2}{c}{\textbf{LLaMA-60M}}
        & \textbf{Qwen2.5-3B}
        & \textbf{Mistral-7B-v0.1} \\

        \cmidrule(lr){2-3}

        \textbf{Optimizer}
        & \textbf{Time}
        & \textbf{Relative to AdamW}
        & \textbf{Time}
        & \textbf{Time} \\

        \midrule

        AdamW
        & 0h23m
        & 1.00x
        & 5h23m
        & 10h3m \\

        Muon
        & 0h28m
        & 1.22x
        & 5h33m
        & 10h11m \\

        HTMuon
        & 2h04m
        & 5.39x
        & 5h38m
        & 11h19m \\

        Soren
        & 0h39m
        & 1.69x
        & 5h33m
        & 10h29m \\

        \bottomrule
    \end{tabular}
\end{table}

Table~\ref{tab:wall_clock_time} reports the wall-clock training time of
different optimizers across pre-training, SFT, and DPO settings. Soren
introduces a moderate overhead compared with AdamW and Muon, with a relative
training time of $1.69\times$ that of AdamW on LLaMA-60M. However, this
overhead remains substantially smaller than that of HTMuon, which requires
$5.39\times$ the training time of AdamW in the same setting. For the larger
SFT and DPO experiments, the training time of Soren remains close to that of
AdamW and Muon. Overall, these results highlight a favorable computational
trade-off, where Soren achieves improved performance while maintaining a
moderate wall-clock training cost.

We also provide the FLOPs analysis for Muon and Soren.
Let the initial weight matrix be
$\boldsymbol{W}_0 \in \mathbb{R}^{m \times n}$.

\begin{itemize}
    \item \textbf{FLOPs analysis for Muon (Algorithm~\ref{alg:muon}):}
    \begin{equation}
        F_{\mathrm{Muon}}
        =
        20mnr + O(mn).
    \end{equation}

    \item \textbf{FLOPs analysis for Soren (Algorithm~\ref{alg:soren}):}
    \begin{equation}
        F_{\mathrm{Soren}}
        =
        4mnr(S+2) + O(mn).
    \end{equation}
\end{itemize}

where $r=\min(m,n)$ and $S$ denotes the number of SNS iterations in the
$Q$ stream.

Compared with Muon, Soren incurs additional computational cost primarily
due to the extra Newton--Schulz iterations introduced by the SNS
procedure.

\subsection{Spectral Approximation of the Soren Transform in LLM training}
\label{app:soren_exact_spectra}

\begin{figure}[htbp]
    \centering
    \begin{subfigure}[b]{0.32\textwidth}
        \centering
        \IfFileExists{figures/soren_vs_exact_logcentered_layer7_step000001.png}{%
            \includegraphics[width=\textwidth]{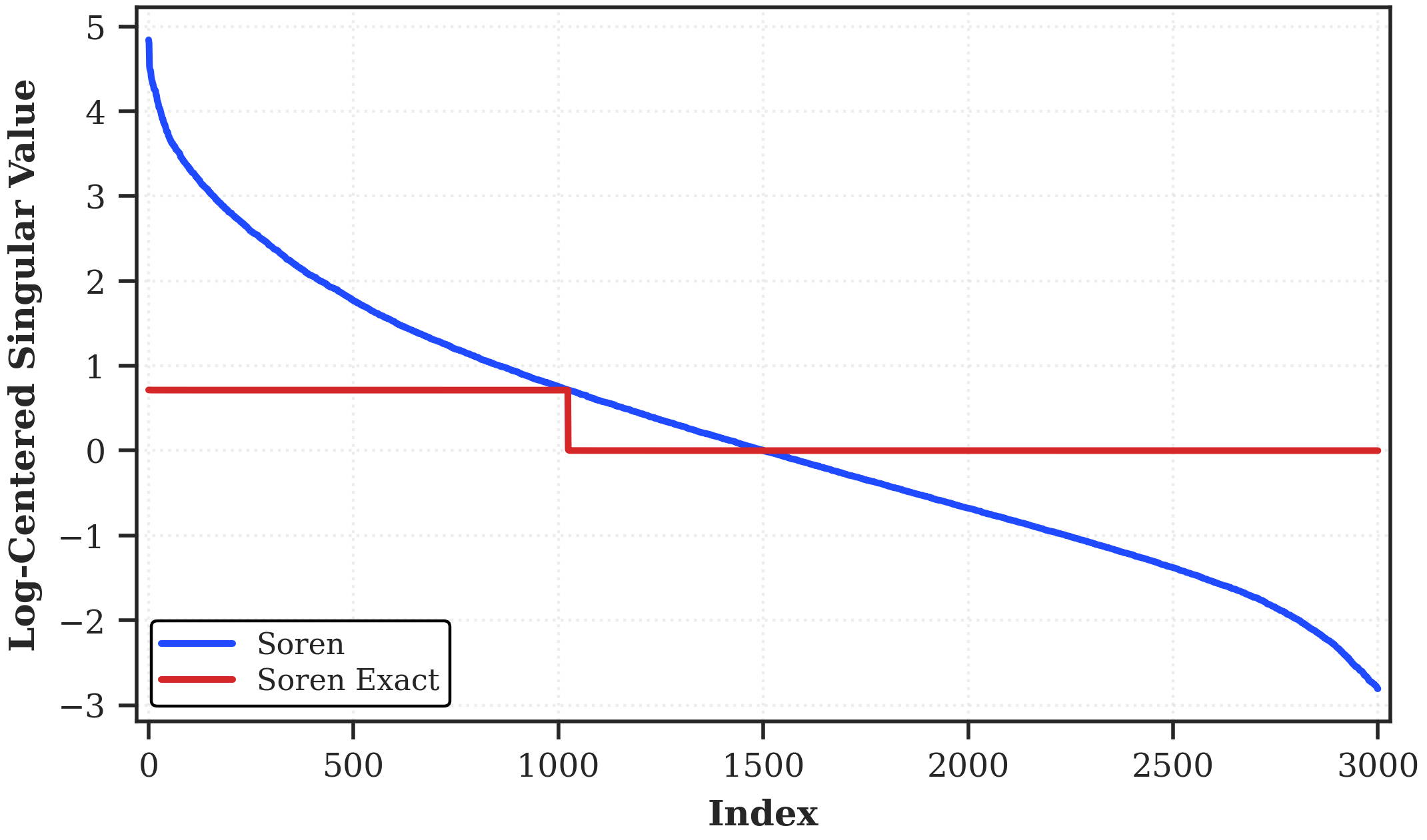}%
        }{\fbox{\parbox[c][3cm][c]{0.9\linewidth}{\centering Missing image: Step 1}}}
        \caption{Step 1}
        \label{fig:soren_exact_step1}
    \end{subfigure}
    \hfill
    \begin{subfigure}[b]{0.32\textwidth}
        \centering
        \IfFileExists{figures/soren_vs_exact_logcentered_layer7_step002500.png}{%
            \includegraphics[width=\textwidth]{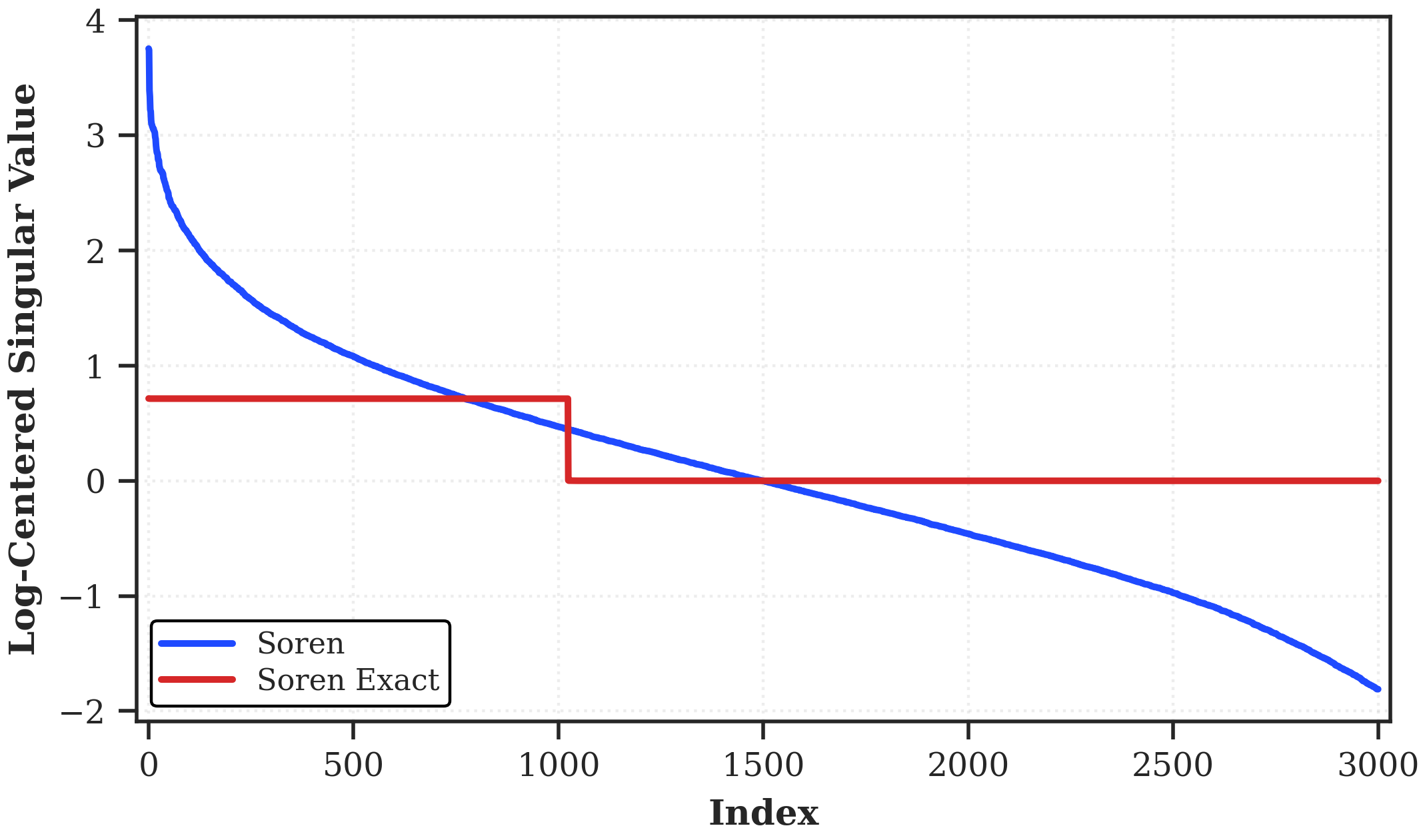}%
        }{\fbox{\parbox[c][3cm][c]{0.9\linewidth}{\centering Missing image: Step 2500}}}
        \caption{Step 2500}
        \label{fig:soren_exact_step2500}
    \end{subfigure}
    \hfill
    \begin{subfigure}[b]{0.32\textwidth}
        \centering
        \IfFileExists{figures/soren_vs_exact_logcentered_layer7_step005000.png}{%
            \includegraphics[width=\textwidth]{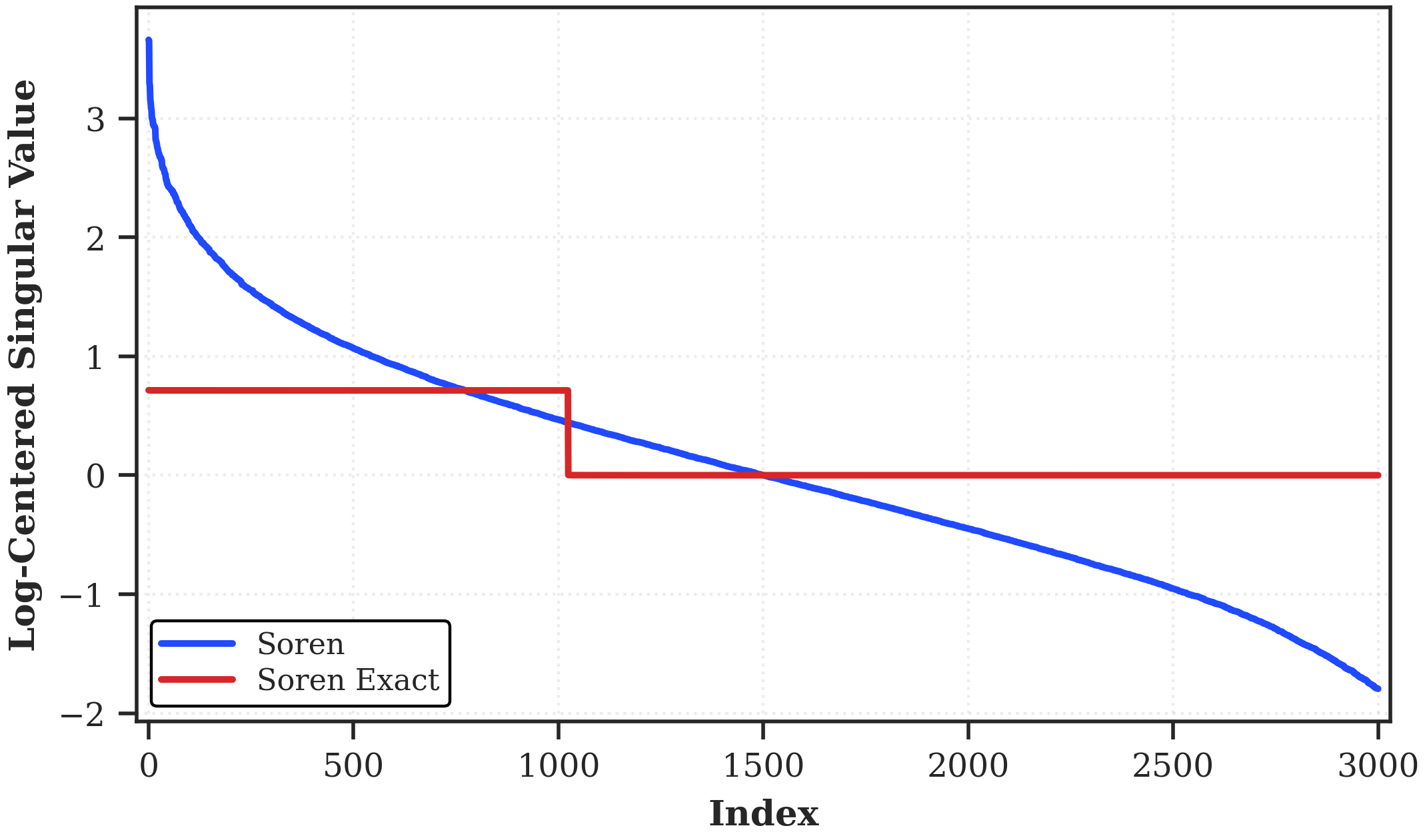}%
        }{\fbox{\parbox[c][3cm][c]{0.9\linewidth}{\centering Missing image: Step 5000}}}
        \caption{Step 5000}
        \label{fig:soren_exact_step5000}
    \end{subfigure}
    \caption{Log-centered singular spectra of the exact sigmoid spectral map and its finite-step Soft Newton--Schulz (SNS) approximation for a representative gradient matrix from the final layer of the Llama-135M model. Log singular values are centered by subtracting the median of the $\log_2$ singular values to isolate relative spectral shape from overall scale. The columns show the initial, intermediate, and later training steps. SNS closely tracks the exact sigmoid transformation across training stages, preserving its relative spectral structure.}
    \label{fig:soren_exact_layer7}
\end{figure}

Figure~\ref{fig:soren_exact_layer7} compares the log-centered singular spectral produced by SNS and the exact sigmoid spectral map at different training stages. The finite-step SNS approximation closely matches the exact sigmoid transformation across all three snapshots, reproducing the overall spectral shape and the relative compression of dominant and weaker modes. This agreement persists throughout training, demonstrating that a small number of Newton--Schulz iterations is sufficient to provide a faithful polynomial approximation to the exact sigmoid spectral map without explicit SVD computation.

\section{Detailed Hyperparameters}
\label{hyperparam}

\subsection{Hyperparameter settings for LLM
pretraining}

Our practical Soren implementation follows the hybrid Muon--AdamW formulation \citep{jordan2024muon}, using standard AdamW hyperparameters for the subset of parameters updated with AdamW. For the C4 experiments, we follow the LLaMA architecture configurations reported in Table~\ref{tab:llama_hyperparameters} to ensure reproducibility. All model variants are trained with a maximum sequence length of 256 and a batch size of 512, corresponding to approximately 13K tokens per batch. We train LLaMA-60M and LLaMA-135M on a single H100 GPU, while LLaMA-350M and LLaMA-1B are trained on four L40S GPUs.

\begin{table}[htbp]
    \centering
    \caption{Hyperparameters of LLaMA models.}
    \label{tab:llama_hyperparameters}
    \begin{tabular}{lccccccc}
        \toprule
        Params & Hidden & Intermediate & Heads & Blocks & Steps & Data amount & Batch Size \\
        \midrule
        60M  & 512  & 1376 & 8  & 8  & 5K & 0.5B & 512 \\
        135M & 768  & 2048 & 12 & 12 & 10K & 1B & 512 \\
        350M & 1024 & 2736 & 16 & 24 & 20K & 2B & 512 \\
        1B   & 2048 & 5461 & 32 & 24 & 60K & 6B & 512 \\
        \bottomrule
    \end{tabular}
\end{table}

For HTMuon, we use $p=0.125$ throughout all LLM pre-training experiments, following the setting identified as optimal in~\citep{htmuon}. The Muon update is applied to both the embedding and output layers during training. Further details on the hyperparameter configurations are provided in Tables~\ref{tab:llama_60m_135m_hyperparameters} and~\ref{tab:llama_350m_1b_hyperparameters}.

\begin{table}[htbp]
    \centering
    \caption{Hyperparameters for LLaMA-60M and LLaMA-135M on the C4 dataset.
    Boldface values denote the optimal hyperparameters.}
    \label{tab:llama_60m_135m_hyperparameters}
    \begin{tabular}{lcc|cc}
        \toprule
        & \multicolumn{2}{c|}{LLaMA-60M}
        & \multicolumn{2}{c}{LLaMA-135M} \\
        \cmidrule(lr){2-3} \cmidrule(lr){4-5}
        Optimizer & LR & WD & LR & WD \\
        \midrule
        AdamW
        & \textbf{1e-3} & 0.1
        & \textbf{1e-3} & 0.1 \\

        Muon
        & \{\textbf{0.01}, 0.02, 0.03, 0.04\} & 0.1
        & \{\textbf{0.01}, 0.02, 0.03, 0.04\} & 0.1 \\

        HTMuon
        & \{0.01, 0.02, \textbf{0.03}, 0.04\} & 0.1
        & \{0.01, 0.02, \textbf{0.03}, 0.04\} & 0.1 \\

        Soren
        & \{0.01, 0.02, \textbf{0.03}, 0.04\} & 0.1
        & \{0.01, 0.02, \textbf{0.03}, 0.04\} & 0.1 \\
        \bottomrule
    \end{tabular}
\end{table}

\begin{table}[htbp]
    \centering
    \caption{Hyperparameters for LLaMA-350M and LLaMA-1B on the C4 dataset.
    Boldface values denote the optimal hyperparameters.}
    \label{tab:llama_350m_1b_hyperparameters}
    \begin{tabular}{lcc|cc}
        \toprule
        & \multicolumn{2}{c|}{LLaMA-350M}
        & \multicolumn{2}{c}{LLaMA-1B} \\
        \cmidrule(lr){2-3} \cmidrule(lr){4-5}
        Optimizer & LR & WD & LR & WD \\
        \midrule

        AdamW
        & \textbf{1e-3} & 0.1
        & \textbf{6e-4} & 0.1 \\

        Muon
        & \{0.0025, \textbf{0.005}, 0.01, 0.015\} & 0.1
        & \{\textbf{0.005}, 0.01\} & 0.1 \\

        HTMuon
        & \{0.0025, \textbf{0.005}, 0.01, 0.015\} & 0.1
        & \{\textbf{0.005}, 0.01\} & 0.1 \\

        Soren
        & \{0.0025, 0.005, \textbf{0.01}, 0.015\} & 0.1
        & \{0.005, \textbf{0.01}\} & 0.1 \\
        
        \bottomrule
    \end{tabular}
\end{table}

\subsection{Hyperparameter Settings for LLM
Supervised Fine-Tuning and LLM Direct Preference Optimization}
\label{app:sft_dpo_hyperparameters}

We report the hyperparameter configurations used for supervised fine-tuning (SFT) and direct preference optimization (DPO) experiments in Table~\ref{tab:sft_dpo_hyperparameters}. For SFT, we use Qwen2.5-3B on the UltraChat 200k dataset, while for DPO, we use Mistral-7B-v0.1 on the UltraFeedback Binarized dataset. For each optimizer, we evaluate a set of learning rates and select the configuration achieving the best validation performance. We use a weight decay of $0.1$ for all experiments. Boldface values denote the selected learning rates.

\begin{table}[htbp]
    \centering
    \caption{Hyperparameters for Qwen2.5-3B and Mistral-7B-v0.1 on the UltraChat 200k and UltraFeedback Binarized datasets. Boldface values denote the selected learning rates.}
    \label{tab:sft_dpo_hyperparameters}
    \begin{tabular}{lcc|cc}
        \toprule
        & \multicolumn{2}{c|}{Qwen2.5-3B}
        & \multicolumn{2}{c}{Mistral-7B-v0.1} \\
        \cmidrule(lr){2-3} \cmidrule(lr){4-5}
        Optimizer & LR & WD & LR & WD \\
        \midrule
        AdamW
        & \{1e-6, 5e-6, 2e-5, \textbf{3e-5}\} & 0.1
        & \{\textbf{1e-7}, 4e-7, 5e-7\} & 0.1 \\

        Muon
        & \{1e-6, 5e-6, \textbf{2e-5}, 3e-5\} & 0.1
        & \{1e-7, 4e-7, \textbf{5e-7}\} & 0.1 \\

        HTMuon
        & \{1e-6, 5e-6, \textbf{2e-5}, 3e-5\} & 0.1
        & \{1e-7, 4e-7, \textbf{5e-7}\} & 0.1 \\

        Soren
        & \{\textbf{1e-6}, 5e-6, 2e-5, 3e-5\} & 0.1
        & \{1e-7, \textbf{4e-7}, 5e-7\} & 0.1 \\
        \bottomrule
    \end{tabular}
\end{table}

\section{More Evaluation Details}
\label{app:sns_metrics}

As a complementary evaluation, we measure the mean absolute error (MAE) and
maximum absolute error (MaxErr) of the SNS approximation on gradients
extracted from a convolutional network~\citep{cnn} trained on
CIFAR-10~\citep{cifar}. For each validation mini-batch, we compute the
output-layer gradient $G$ and normalize it as
$\bar{G}=G/(\|G\|_F+\varepsilon)$. We then obtain the singular value
decomposition
$\bar{G}=U\operatorname{diag}(\sigma_i)V^\top$ and compare the exact sigmoid
coefficients $s_i=\operatorname{sigmoid}(\sigma_i)$ with their SNS
approximations $\widehat{s}_i$. We report the following absolute-error
metrics:
\[
\mathrm{MAE}
=
\frac{1}{r}\sum_{i=1}^{r}
\left|s_i-\widehat{s}_i\right|,
\qquad
\mathrm{MaxErr}
=
\max_{1\leq i\leq r}
\left|s_i-\widehat{s}_i\right|.
\]

\begin{table}[t]
    \centering
    \caption{Approximation error of the SNS iteration relative to the exact
    sigmoid spectral map. Each entry reports the mean $\pm$ standard deviation
    over validation mini-batches.}
    \label{tab:poly_sigmoid_epoch_eval}
    \resizebox{\textwidth}{!}{%
    \begin{tabular}{lccccc}
        \toprule
        \textbf{Metric}
        & \textbf{Epoch 2}
        & \textbf{Epoch 4}
        & \textbf{Epoch 6}
        & \textbf{Epoch 8}
        & \textbf{Epoch 10} \\
        \midrule
        MAE
        & $0.0138_{\pm0.0008}$
        & $0.0139_{\pm0.0005}$
        & $0.0155_{\pm0.0007}$
        & $0.0158_{\pm0.0010}$
        & $0.0163_{\pm0.0010}$ \\
        MaxErr
        & $0.0902_{\pm0.0100}$
        & $0.0893_{\pm0.0069}$
        & $0.0924_{\pm0.0095}$
        & $0.0950_{\pm0.0076}$
        & $0.0971_{\pm0.0112}$ \\
        \bottomrule
    \end{tabular}}
\end{table}

Table~\ref{tab:poly_sigmoid_epoch_eval} summarizes the absolute approximation
errors across training. These results complement the scalar relative-error
analysis in Section~\ref{subsec:sns_iteration_analysis} by evaluating the
approximation on gradient matrices encountered during training. Importantly,
these metrics measure absolute approximation errors and therefore do not
directly quantify the maximum relative error on CNN gradients or establish
the relative-error bound assumed in the convergence analysis.

\section{Optimization Algorithms}
\label{app:algorithms}

In this section, we provide the algorithms for all optimizers evaluated in our study. We use the following notation throughout: $\mathbf{W}_t$ denotes the model parameters at step $t$, $\mathbf{G}_t$ the corresponding gradient, $\eta$ the learning rate, $\lambda$ the weight decay coefficient, $\beta_1$ and $\beta_2$ the first- and second-moment decay rates, respectively, and $\epsilon$ a numerical stability constant. We further use $\|\mathbf{G}_t\|$ to denote the gradient norm, and $\mathbf{m}_t$ and $\mathbf{v}_t$ to denote the first and second moment estimates, respectively. Unless otherwise specified, all operations are element-wise.

\subsection{Soren Optimizer}
\label{app:soren_algorithm}

We summarize the proposed Soren optimizer and its polynomial realization in the
following algorithms. Algorithm~\ref{alg:sns} defines the Soft Newton--Schulz
(SNS) iteration, denoted by $\mathcal{SNS}(\cdot)$, which approximates the
sigmoid spectral map without requiring an explicit SVD. Algorithm~\ref{alg:soren}
presents the complete Soren update, applying $\mathcal{SNS}(\cdot)$ to the
Nesterov-adjusted momentum.

\begin{algorithm}[htbp]
\caption{Soft Newton--Schulz Iteration $\mathcal{SNS}(\boldsymbol{X})$}
\label{alg:sns}
\begin{algorithmic}[1]

\Require Input matrix $\boldsymbol{X} \in \mathbb{R}^{m \times n}$,
steps $K$, numerical stability constant $\epsilon=10^{-8}$

\State $\widehat{\boldsymbol{X}}
\gets
\boldsymbol{X}/(\|\boldsymbol{X}\|_F+\epsilon)$

\State $\boldsymbol{Q}\gets\widehat{\boldsymbol{X}}$

\For{$k=1,\ldots,K$}
    \State $\boldsymbol{Q}
    \gets
    \frac{1}{2}
    (3\boldsymbol{I}-\boldsymbol{Q}\boldsymbol{Q}^{\top})
    \boldsymbol{Q}$
\EndFor

\State $\boldsymbol{T}\gets\frac{1}{4}\boldsymbol{X}$

\For{$j=1,2$}
    \State $\boldsymbol{T}
    \gets
    \frac{1}{2}
    (3\boldsymbol{I}-\boldsymbol{T}\boldsymbol{T}^{\top})
    \boldsymbol{T}$
\EndFor

\State \Return
$\frac{1}{2}\boldsymbol{Q}+\frac{1}{2}\boldsymbol{T}$

\end{algorithmic}
\end{algorithm}

\begin{algorithm}[htbp]
\caption{Soren}
\label{alg:soren}
\begin{algorithmic}[1]

\Require Learning rate $\eta$, momentum coefficient $\mu$

\State Initialize $\boldsymbol{W}_0$ and
$\boldsymbol{M}_0\gets\boldsymbol{0}$

\For{$t=1,\ldots$}

    \State $\boldsymbol{G}_t
    \gets
    \nabla_{\boldsymbol{W}}
    \mathcal{L}(\boldsymbol{W}_{t-1})$

    \State $\boldsymbol{M}_t
    \gets
    \mu\boldsymbol{M}_{t-1}
    +(1-\mu)\boldsymbol{G}_t$

    \State $\boldsymbol{N}_t
    \gets
    (1-\mu)\boldsymbol{G}_t
    +\mu\boldsymbol{M}_t$

    \State $\boldsymbol{O}_t
    \gets
    \mathcal{SNS}(\boldsymbol{N}_t)$

    \State $\boldsymbol{W}_t
    \gets
    \boldsymbol{W}_{t-1}
    -\eta\boldsymbol{O}_t$

\EndFor

\State \Return $\boldsymbol{W}_t$

\end{algorithmic}
\end{algorithm}

\subsection{Baseline Optimizers}
\label{app:baseline_algorithm}

This section presents the algorithmic details of the baseline optimizers
considered in our experiments. We provide the update procedures for HTMuon,
Muon, and AdamW to clarify the optimization
steps used throughout our empirical evaluation. The algorithms are presented
using a unified notation consistent with the formulation of Soren above.

\begin{algorithm}[htbp]
\caption{HTMuon}
\label{alg:htmuon}
\begin{algorithmic}[1]

\Require Initial weights $\boldsymbol{W}_0\in\mathbb{R}^{m\times n}$,
loss function $\mathcal{L}$, learning rate $\eta$,
momentum coefficient $\beta$, weight decay $\lambda$,
power $p\in(0,1)$

\State Initialize $\boldsymbol{M}_0\gets\boldsymbol{0}$

\For{$t=1,2,\ldots$}

    \State $\boldsymbol{G}_t
    \gets
    \nabla_{\boldsymbol{W}}
    \mathcal{L}(\boldsymbol{W}_{t-1})$

    \State $\boldsymbol{M}_t
    \gets
    \beta\boldsymbol{M}_{t-1}
    +(1-\beta)\boldsymbol{G}_t$

    \State $\boldsymbol{U}_t,\boldsymbol{\Sigma}_t,
    \boldsymbol{V}_t^\top
    \gets
    \operatorname{SVD}(\boldsymbol{M}_t)$

    \State $\boldsymbol{O}_t
    \gets
    \boldsymbol{U}_t
    \boldsymbol{\Sigma}_t^p
    \boldsymbol{V}_t^\top$

    \State $s
    \gets
    \sqrt{\max\left(1,\frac{m}{n}\right)}$

    \State $\boldsymbol{W}_t
    \gets
    \boldsymbol{W}_{t-1}
    -\eta\lambda\boldsymbol{W}_{t-1}
    -\eta s\boldsymbol{O}_t$

\EndFor

\State \Return $\boldsymbol{W}_t$

\end{algorithmic}
\end{algorithm}

\begin{algorithm}[htbp]
\caption{Muon}
\label{alg:muon}
\begin{algorithmic}[1]

\Require Learning rate $\eta$, momentum coefficient $\mu$

\State Initialize $\boldsymbol{M}_0\gets\boldsymbol{0}$

\For{$t=1,\ldots$}

    \State $\boldsymbol{G}_t
    \gets
    \nabla_{\boldsymbol{W}}
    \mathcal{L}(\boldsymbol{W}_{t-1})$

    \State $\boldsymbol{M}_t
    \gets
    \mu\boldsymbol{M}_{t-1}
    +\boldsymbol{G}_t$

    \State $\boldsymbol{O}_t
    \gets
    \operatorname{NewtonSchulz}(\boldsymbol{M}_t)$

    \State $\boldsymbol{W}_t
    \gets
    \boldsymbol{W}_{t-1}
    -\eta\boldsymbol{O}_t$

\EndFor

\State \Return $\boldsymbol{W}_t$

\end{algorithmic}
\end{algorithm}

\begin{algorithm}[htbp]
\caption{AdamW}
\label{alg:adamw}
\begin{algorithmic}[1]

\Require Learning rate $\eta$, weight decay $\lambda$,
moment coefficients $\beta_1,\beta_2$,
numerical stability constant $\epsilon$

\State Initialize $\boldsymbol{W}_0$,
$\boldsymbol{m}_0\gets\boldsymbol{0}$,
$\boldsymbol{v}_0\gets\boldsymbol{0}$

\For{$t=1,\ldots$}

    \State $\boldsymbol{G}_t
    \gets
    \nabla_{\boldsymbol{W}}
    \mathcal{L}(\boldsymbol{W}_{t-1})$

    \State $\boldsymbol{m}_t
    \gets
    \beta_1\boldsymbol{m}_{t-1}
    +(1-\beta_1)\boldsymbol{G}_t$

    \State $\boldsymbol{v}_t
    \gets
    \beta_2\boldsymbol{v}_{t-1}
    +(1-\beta_2)\boldsymbol{G}_t^2$

    \State $\widehat{\boldsymbol{m}}_t
    \gets
    \boldsymbol{m}_t/(1-\beta_1^t)$

    \State $\widehat{\boldsymbol{v}}_t
    \gets
    \boldsymbol{v}_t/(1-\beta_2^t)$

    \State $\boldsymbol{O}_t
    \gets
    \widehat{\boldsymbol{m}}_t
    /
    \left(
    \sqrt{\widehat{\boldsymbol{v}}_t}+\epsilon
    \right)$

    \State $\boldsymbol{W}_t
    \gets
    \boldsymbol{W}_{t-1}
    -\eta\boldsymbol{O}_t
    -\eta\lambda\boldsymbol{W}_{t-1}$

\EndFor

\State \Return $\boldsymbol{W}_t$

\end{algorithmic}
\end{algorithm}

\end{document}

%% file: math_commands.tex
\usepackage{amsmath,amsfonts,bm}

\def\eqref#1{equation~\ref{#1}}

\def\plaineqref#1{\ref{#1}}

\def\1{\bm{1}}

\DeclareMathAlphabet{\mathsfit}{\encodingdefault}{\sfdefault}{m}{sl}
\SetMathAlphabet{\mathsfit}{bold}{\encodingdefault}{\sfdefault}{bx}{n}

